  \providecommand\BibTeX{{%
    \normalfont B\kern-0.5em{\scshape i\kern-0.25em b}\kern-0.8em\TeX}}}
\DeclareMathOperator{\PDAG}{PDAG}
\DeclareMathOperator{\DAG}{DAG}
\DeclareMathOperator{\Adj}{\mathbf{Adj}}
\DeclareMathOperator{\Dp}{\mathbf{Dp}}
\DeclareMathOperator{\Ds}{\mathbf{Ds}}
\DeclareMathOperator{\Ip}{\mathbf{Ip}}
\DeclareMathOperator{\Is}{\mathbf{Is}}
\newcommand{\g}[1][G]{\mathcal{#1}}
\newcommand{\pathsubset}[1][T]{\mathbf{#1}}
\newcommand{\methodname}{FACTS}
\newtheorem{theorem}{Theorem}[section]
\newtheorem{lemma}[theorem]{Lemma}
\newtheorem{proposition}[theorem]{Proposition}
\newcommand{\nind}{\not\!\perp\!\!\!\perp}
\newcommand{\ind}{\!\perp\!\!\!\perp}
\algnewcommand{\LineComment}[1]{\State \(\triangleright\) #1}
\definecolor{red}{RGB}{255,0,81}
\definecolor{blue}{RGB}{0,139,251}
\begin{document}
	\fancyhead{}

\title{Explaining Algorithmic Fairness Through Fairness-Aware Causal Path Decomposition}


\author{Weishen Pan$^{1}$, Sen Cui$^{1}$, Jiang Bian$^{2}$, Changshui Zhang$^{1}$, Fei Wang$^{3}$}
\affiliation{%
  \institution{$^1$Institute for Artificial Intelligence, Tsinghua University (THUAI), State Key Lab of Intelligent Technologies and Systems, Beijing National Research Center for Information Science and Technology (BNRist), Department of Automation, Tsinghua University, P.R.China\\$^2$Department of Health Outcomes and Biomedical Informatics, College of Medicine, University of Florida, USA\\$^3$Department of Population Health Sciences, Weill Cornell Medicine, USA}
  \streetaddress{}
  \city{}
  \state{}
  \country{}
  \postcode{}
}
\email{{pws15,cuis19}@mails.tsinghua.edu.cn, bianjiang@ufl.edu}
\email{zcs@mail.tsinghua.edu.cn, few2001@med.cornell.edu}

\renewcommand{\authors}{Weishen Pan, Sen Cui, Jiang Bian, Changshui Zhang, Fei Wang}

\begin{abstract}
Algorithmic fairness has aroused considerable interests in data mining and machine learning communities recently. So far the existing research has been mostly focusing on the development of quantitative metrics to measure algorithm disparities across different protected groups, and approaches for adjusting the algorithm output to reduce such disparities. In this paper, we propose to study the problem of identification of the source of model disparities. Unlike existing interpretation methods which typically learn feature importance, we consider the causal relationships among feature variables and propose a novel framework to decompose the disparity into the sum of contributions from fairness-aware causal paths, which are paths linking the sensitive attribute and the final predictions, on the graph. We also consider the scenario when the directions on certain edges within those paths cannot be determined. Our framework is also model agnostic and applicable to a variety of quantitative disparity measures. Empirical evaluations on both synthetic and real-world data sets are provided to show that our method can provide precise and comprehensive explanations to the model disparities.
\end{abstract}

\begin{CCSXML}
	<ccs2012>
	<concept>
	<concept_id>10002950.10003648.10003649.10003655</concept_id>
	<concept_desc>Mathematics of computing~Causal networks</concept_desc>
	<concept_significance>500</concept_significance>
	</concept>
	<concept>
	<concept_id>10010147.10010257.10010258.10010259</concept_id>
	<concept_desc>Computing methodologies~Supervised learning</concept_desc>
	<concept_significance>300</concept_significance>
	</concept>
	</ccs2012>
\end{CCSXML}

\ccsdesc[500]{Mathematics of computing~Causal networks}
\ccsdesc[300]{Computing methodologies~Supervised learning}

\keywords{fairness; explanation; causal graph}

\maketitle

\section{Introduction}
Machine learning algorithms have been widely applied in a variety of real-world applications including high-stakes scenarios such as loan approvals, criminal justice, healthcare, etc. In these real-world applications, fairness is getting increasing attentions as machine learning algorithms may lead to discrimination against certain disadvantaged sub-populations. This triggers the research on algorithmic fairness, which focus on whether members of specific unprivileged groups are more likely to receive unfavorable decisions made from machine learning algorithms.

One important line of research in the computational fairness community is to develop metrics for measuring group fairness, such as demographic parity \cite{dwork2012fairness}, equalized opportunity \cite{hardt2016equality}, accuracy parity \cite{zafar2017fairness}, etc., so that the discrepancy among the decisions made in different groups (a.k.a. disparity) are precisely quantified, which can further inspire the development of fair machine learning models that aim to minimize such disparities \cite{zafar2017fairness,agarwal2018reductions,zhang2018mitigating,madras2018learning}.

Despite the great efforts on fairness quantification and fair model development, one critical issue that has not been studied extensively is the diagnostics of model fairness, i.e., {\em what are the reasons that lead to the model disparity?} This information is crucial for understanding the intrinsic model mechanism and provides insights on how to improve model fairness. As an example, under the current pandemic, researchers have found that the racial and ethnic minority groups have been disproportionately affected by COVID-19. African Americans and Hispanics or Latinos are found to be more likely to have positive tests \cite{adegunsoye2020association,martinez2020sars}, COVID-19 associated hospitalizations and deaths \cite{price2020hospitalization}, compared with non-Hispanic Whites. In this case, it is crucial to figure out whether such disparity is coming from genetic factors or accessibility to adequate healthcare services, which will imply completely different clinical management plans and public health policies for battling with the pandemic.

In view of this need, recently researchers have leveraged Shapley value based methods \cite{lundberg2017unified} to attribute the model disparity as the sum of individual contributions from input features \cite{lundberg2020fairness, begley2020explainability}, so that we can understand which feature contributes more or less to the model disparity. However, in real-world problems, the mechanism that causes model disparity could be much more complex. Considering the COVID-19 example above, it turns out that one main factor contributing to such disparity is disproportionate access to care for patients with different races and ethnicity, which can be impacted by both economic status \citep{gould2020black} and food insecurity \cite{wolfson2020food}. In practice, they correspond to two different causal paths leading to outcome disparity and imply different public health intervention policies.

In this paper, we propose \methodname~(which stands for Fairness-Aware Causal paTh decompoSition), a novel framework for algorithm fairness explanation. \methodname~decomposes the model disparity as the sum over the contributions of a set of Fairness-Aware Causal paThs (FACT) linking the sensitive attributes with the outcome variable. In this way, our approach can quantify the different causality mechanisms that can lead to the overall model disparity. Specifically, \methodname~includes two steps:
\begin{itemize}
	\item {\em Step 1. FACTs identification}, where we propose a method to identify all active paths that link the sensitive attributes and final outcome without colliders given a causal graph constructed on feature variables (which are referred to as FACTs). The graph could be given according to domain knowledge or learned from the training data. One important consideration here is that frequently the causal directions for certain edges on the graph cannot be determined \cite{borboudakis2012incorporating,perkovic2020identifying}, which makes the graph a Partially Directed Acyclic Graph (PDAG). Our proposed algorithm can effectively identify active paths on PDAGs.
	
	\item {\em Step 2. Disparity attribution through Shapley value decomposition}, where we propose a Shapley value \cite{lundberg2017unified} based method to attribute the quantified model disparity value (e.g., according to demographic parity  \cite{dwork2012fairness}) to identified FACTs, so that the contribution of each FACT can be quantified.
\end{itemize}

In addition, with the derived attributed disparity on FACTs, we further develop a fair learning approach by selectively removing the FACTs based on their effects on disparity and accuracy through data transformation. Our framework is model agnostic and can be applied to a broad set of popular group-fairness criteria including demographic parity, equalized opportunity, equalized odds, and accuracy parity. 

With experiments on both synthetic and real-world datasets, we show that \methodname~can accurately quantify individual path contributions to the model disparity. With qualitative analysis on real-world datasets, we demonstrate how our approach can appropriately explain the sources of disparity and successfully make fair adjustments\footnote{We upload our source code on https://github.com/weishenpan15/FACTS.}.

\section{Preliminaries and Related Works}
\subsection{Notations}
In this paper, we use capitalized/lower-case letters in italics to represent a variable/value of the variable. We use capitalized/lower-case letters in boldface to represent a variable set/values of the variable set. We use $\{\dots\}$ to represent a set without ordering relationship and use $[\dots]$ to represent a sequence. $\pi$ is a function indicating the rank of specific elements in an ordered sequence. For example, for an order pair of features $[X_1, X_2]$, we will have $\pi(1) < \pi(2)$. 

Suppose we are given a dataset including samples characterized by a set of variables $\{A, \mathbf{X}, Y\}$, where $\mathbf{X} = \{X_1,...,X_M\}$ is the set of input feature variables. $A \in \{0,1\}$ is the sensitive attribute and $Y \in \{0,1\}$ is the outcome. We set $Y=1$ as the favored outcome. $f$ is a trained model and $\hat{Y}$ is the predicted outcome. $\mathbf{x}=(x_1,x_2,\cdots,x_M)^\top$ is a concrete data vector with $X_1=x_1,X_2=x_2,\cdots,X_M=x_M$.

\subsection{Causal Model}
\label{sec:causal_models}
We introduce causal model-related concepts that will be used throughout the paper in this subsection. Our descriptions are based on the definitions in \cite{spirtes2000causation,perkovic2020identifying}.

\noindent \textbf{Nodes and Edges.} A graph $\g$ consists of a set of nodes (variables) $\{A, Y, \hat{Y}, X_{1},\dots,X_{M}\}$ and a set of edges (variable relations). The edges can be either directed ($\rightarrow$) and undirected ($-$). We call two nodes are adjacent to each other if there is an edge linking them. The collection of nodes adjacent to $X_i$ is denoted as $\Adj(X_i)$.

\noindent \textbf{Paths.} A \textit{path} $p$ from $X_i$ to $X_j$ in $\g$ is a sequence of nodes where every successive nodes are adjacent in $\g$. A path from $X_i$ to $X_j$ in which all edges are directed from $X_i$ towards $X_j$ ($X_i \rightarrow \dots \rightarrow X_j$) is a \textit{causal path} from $X_i$ to $X_j$.

\noindent \textbf{Causal Relationships.} $X_i$ is a \textit{parent} of $X_j$ if there is a directed edge from $X_i$ to $X_j$, and $X_j$ is a \textit{child} of $X_i$. $X_i$ is an \textit{ancestor} of $X_j$ if there is a causal path from $X_i$ to $X_j$, and $X_j$ is a \textit{descendant} of $X_i$. Following prior research \cite{kilbertus2017avoiding,baer2019fairness}, $\hat{Y}$ is the child of all $X_i$, which means the predictor $f$ maps the features variables $\mathbf{X}$ to predicted output $\hat{Y}$. 

\noindent \textbf{DAGs and PDAGs.}
A \textit{directed graph} is a graph where all edges are directed. A directed graph without directed cycles is a \textit{directed acyclic graph $(\DAG)$}, where \textit{directed cycle} is formed as a causal path from $X_i$ to $X_j$ plus a directed edge $X_j \to X_i$. A \textit{partially directed graph} is a graph where edges can be either directed or undirected. Similarly, a \textit{partially directed acyclic graph $(\PDAG)$} is a partially directed graph without directed cycles. 

\begin{figure}[tbp]
	\centering
	\includegraphics[width=1.5in]{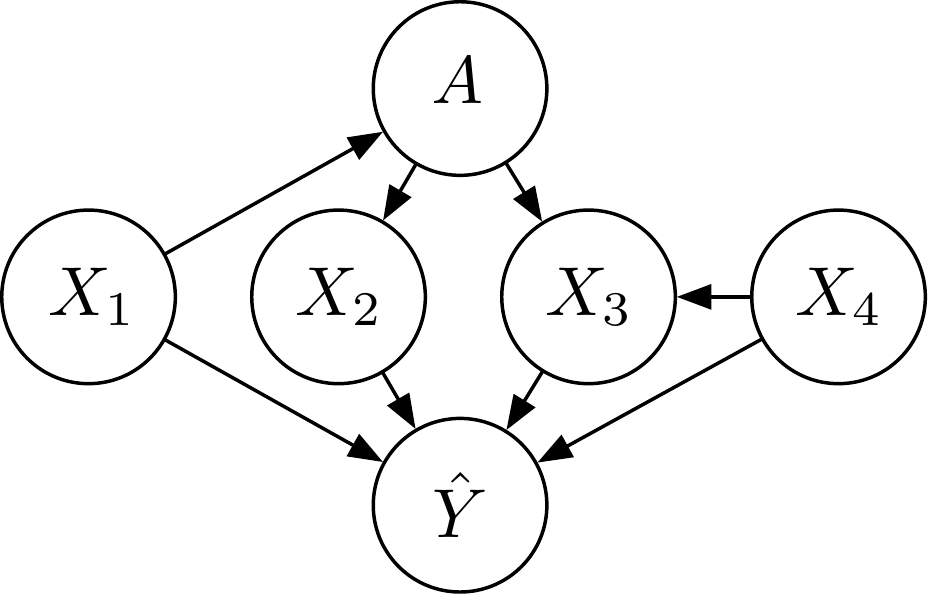}
	\caption{An example of a causal graph. Here the prediction $\hat{Y}$ is obtained by a function $f$ which takes $X_1, \dots ,X_4$ as input features.}
	\label{fig:intro_examples}
	\vspace{-2.0em}
\end{figure}

\noindent \textbf{Colliders, Active Nodes and Paths.} Even though there are paths linking $X_i$ and $X_j$ in $\g$, $X_i$ and $X_j$ are not guaranteed to be dependent. In the example shown in Figure \ref{fig:intro_examples}, there are paths (e.g., $A \rightarrow X_3 \leftarrow X_4$) linking $A$ and $X_4$, but $A$ and $X_4$ are independent due to fact that they are linked by a collider $X_3$. To characterize the variable relationships, we introduce the following definitions.

If a path $p$ contains $X_i \rightarrow X_k \leftarrow X_j$ as a subpath, then $X_k$ is a \textit{collider} on $p$. For a given set of nodes $\mathbf{C}$ (referred to as the {\em conditioning set}), a node $X_i$ is active relative to $\mathbf{C}$ on a path $p$ if either: 1) $X_i$ is a not a collider on $p$ and not in $\mathbf{C}$; 2) $X_i$ is a collider, and $X_i$ or any of its descendants is in $\mathbf{C}$. A path is active relative to $\mathbf{C}$ only when every node on the path is active relative to $\mathbf{C}$. When $\mathbf{C} = \emptyset$, the definition of an active path is degenerated to that there is no collider on the path. When we say $p$ is an active path, we means $p$ is an active path relative to $\emptyset$ in this paper.

As in Figure \ref{fig:intro_examples}, considering the path $A \rightarrow X_3 \rightarrow \hat{Y}$ when $\mathbf{C} = \emptyset$, $X_3$ is an active node since $X_3$ is not a collider. Thus $A \rightarrow X_3 \rightarrow \hat{Y}$ is an active path. Similarly, $A \rightarrow X_3 \leftarrow X_4$ is not an active path because $X_3$ is a collider. But $A \rightarrow X_3 \leftarrow X_4$ is an active path relative to $\{X_3\}$ since $X_3$ is a collider in the conditioning set $\{X_3\}$ .

\noindent \textbf{Faithfulness.} The distribution of the variables and $\g$ are faithful to each other means for all $X_i, X_j, \mathbf{C}$, $X_i$ is conditional independent with $X_j$ on $\mathbf{C}$ if and only if there exists no active path from $X_i$ to $X_j$ with relative to $\mathbf{C}$. Faithfulness is an important and common assumption in the research field of causality, which will be also used in this paper.

\subsection{Fairness and Disparity}
\label{sec:fairness}
We list some popular algorithmic fairness definitions as follows:

\noindent \textbf{Demographic Parity (DP)}~\cite{dwork2012fairness}. 
A prediction $\hat{Y}$ satisfies demographic parity if $P(\hat{Y}=1|A=1) = P(\hat{Y}=1|A=0)$.

\noindent \textbf{Equalized Odds}~\cite{hardt2016equality}. 
A prediction $\hat{Y}$ satisfies equalized opportunity if $P(\hat{Y}=1|Y=y,A=1) = P(\hat{Y}=1|Y=y,A=0), \forall y \in \{0,1\}$.

\noindent \textbf{Equalized Opportunity (EO)}~\cite{hardt2016equality}.
A prediction $\hat{Y}$ satisfies equalized opportunity if $P(\hat{Y}=1|Y=1,A=1) = P(\hat{Y}=1|Y=1,A=0)$.

\noindent \textbf{Accuracy Parity}~\cite{zafar2017fairness}
A prediction $\hat{Y}$ satisfies accuracy parity if $P(\hat{Y}=Y|A=1) = P(\hat{Y}=Y|A=0)$.

In practice, we can take the difference between the two sides of the equalities in the above definitions as a quantification measure for disparity. For example,
\begin{eqnarray}
	\Delta_{DP} &=& P(\hat{Y}=1|A=1) - P(\hat{Y}=1|A=0)\label{eq:disparity}\\
	\Delta_{EO} &=& P(\hat{Y}=1|Y=1,A=1) - P(\hat{Y}=1|Y=1,A=0)\label{eq:disparity-EO}
\end{eqnarray}
are two popular algorithm disparity measures used in fairness learning algorithms \cite{madras2018learning, song2019learning}. Here we follow the work of \cite{begley2020explainability} to use signed difference across groups to show which group is privileged. These fairness definitions are based on the (conditional) independence of the $\hat{Y}$ and the $A$, which can be determined using the causal graph \cite{baer2019fairness}. For example, according to the definitions in Section \ref{sec:causal_models}, $\hat{Y}$ is independent to $A$ if there is no active path between them, which can obtain demographic parity. In other words, any non-zero $\Delta_{DP}$ comes from the active paths linking $A$ and $\hat{Y}$. In the example in Figure \ref{fig:intro_examples}, active paths between $A$ and $\hat{Y}$ contain $A \leftarrow X_1 \rightarrow \hat{Y}$, $A \rightarrow X_2 \rightarrow \hat{Y}$ and $ A \rightarrow X_3 \rightarrow \hat{Y}$.

\subsection{Shapley Values}
\label{sec:shap}
Shapley values is a popular concept that has been used in model interpretation in recent years \cite{lundberg2017unified,aas2019explaining,frye2020asymmetric,heskes2020causal}. These methods typically decompose the prediction $f(\mathbf{x})$ for a given $\mathbf{x}$ as follows. 
\begin{equation}
	\label{eq:shap}
	f(\mathbf{x}) = \phi_{f}(0) + \sum\nolimits_{i=1}^{M} \phi_{f(\mathbf{x})}(i)
\end{equation}
\noindent where $\phi_{f(\mathbf{x})}(i)$ is the contribution of feature $X_i$ to $f(\mathbf{x})$. $\phi_{f}(0) = \mathbb{E}f(\mathbf{X})$ is the averaged prediction with the expectation over the observed data distribution $P(\mathbf{X})$. $\phi_{f(\mathbf{x})}(i)$ is referred as the Shapley value of feature $X_i$ for $f(\mathbf{x})$.

In order to calculate $\phi_{f(\mathbf{x})}(i)$, we firstly assume a sequential order $\pi$ for all variables in $\mathbf{X}$, such that $\pi(i)$ corresponds to the rank of $X_i$. The Shapley value of $X_i$ for $f(\mathbf{x})$ with respect to $\pi$ is
\begin{equation}
	\label{eq:shapley_fea}
	\phi^{\pi}_{f(\mathbf{x})}(i) = v_{f(\mathbf{x})}(\{X_j:\pi(j) \leq \pi(i)\}) - v_{f(\mathbf{x})}(\{X_j:\pi(j) < \pi(i)\})
\end{equation}
\noindent where $v_{f(\mathbf{x})}(\mathbf{S})$ represents the model's output on a selected coalition of features $\mathbf{X}_{\mathbf{S}}$ with a feature subset $\mathbf{S} \subset \{X_1,...,X_M\}$. $v_{f(\mathbf{x})}(\mathbf{S})$ must satisfy that $v_{f(\mathbf{x})}(\mathbf{S}) = f(\mathbf{x})$ when $\mathbf{S} = \mathbf{X}$ and $v_{f(\mathbf{x})}(\mathbf{S}) = \phi_{f}(0)$ when $\mathbf{S} = \emptyset$. With $\Pi$ denoting the set of all permutations of features and $w$ denoting a permutation weight satisfying $w(\pi) > 0$ and $\sum_{\pi \in \Pi} w(\pi) = 1$, we can calculate $\phi_{f(\mathbf{x})}(i)$ as
\begin{equation}
	\label{eq:shap_asys}
	\phi_{f(x)}(i) = \sum\nolimits_{\pi \in \Pi} w(\pi) \phi^{\pi}_{f(\mathbf{x})}(i)
\end{equation}
Different versions of Shapley values can be calculated based on different choices of $v_{f(\mathbf{x})}(\mathbf{S})$ and $w$. An obvious choice would be to take a uniform distribution $w(\pi) = \frac{1}{M!}$ and calculate $v_{f(\mathbf{x})}(\mathbf{S})$ as the expectation over the observed distribution of unselected features: $v_{f(\mathbf{x})}(\mathbf{S}) = \mathbb{E}_{\mathbf{X}'_{\bar{\mathbf{S}}}} [f(\mathbf{x}_\mathbf{S}, {\mathbf{x}'}_{\bar{\mathbf{S}}})]$ \cite{lundberg2017unified}. Aas {\em et al.} \cite{aas2019explaining} considered the correlation among $\mathbf{x}$ and propose the calculation of $v_{f(\mathbf{x})}(\mathbf{S})$ as $v_{f(\mathbf{x})}(\mathbf{S}) = \mathbb{E}_{\mathbf{X}'_{\bar{\mathbf{S}}}|\mathbf{X}_\mathbf{S}} [f(\mathbf{x}_{\mathbf{S}}, {\mathbf{x}'}_{\bar{\mathbf{S}}})]$. The resulting Shapley-values are referred to as {\em on-manifold} Shapley-values. 
Frye {\em et al.} \cite{frye2020asymmetric} further considered different choices of $w$. For example, one reasonable approach is to put weights only on those permutations which are consistent with known causal orderings:
\begin{equation}
	w(\pi) \propto \left\{ 
	\begin{array}{ll}
		1 & \text{if $\pi(i) < \pi(j)$ for any known} \\
		~ & \text{ancestor $X_i$ of descendant $X_j$} \\[5pt]
		0 & \text{otherwise}
	\end{array} \right.
\end{equation}

\subsection{Fairness Explanation}
There have been a few studies trying to derive explanations for model fairness, which can be categorized as either feature-based or path-specific explanation.

\subsubsection{Feature-based Explanation}
Lundberg \cite{lundberg2020fairness} and Begley {\em et al.} \cite{begley2020explainability} leveraged Shapley values defined in Eq.(\ref{eq:shap}) to attribute the feature contributions to $\Delta_{DP}$ in Eq.(\ref{eq:disparity}). Specifically, they proposed to use the group difference of $\phi_{f(x)}(i)$ to quantify the contribution of $X_i$ to $\Delta_{DP}$ as follows:
\begin{equation}
	\label{eq:dp_feat}
	\Phi_{f}(i) = \mathbb{E}_{\mathbf{X}|A= 1}[\phi_{f(\mathbf{x})}(i)] - \mathbb{E}_{\mathbf{X}|A=0}[\phi_{f(\mathbf{x})}(i)]
\end{equation}
\noindent Begley {\em et al.} \cite{begley2020explainability} has also extended this formulation to disparity measured on equalized opportunity as in Eq.(\ref{eq:disparity-EO}). However, decomposing model disparity into feature contributions ignores the causal structure of the features. 

\subsubsection{Path-Specific Explanations} 
There have been existing research studying fairness and the causal effects on the outcome by flipping the sensitive attribute value. They also studied the causal effects from particular paths, which are called path-specific effects \cite{kusner2017counterfactual,nabi2018fair,chiappa2019path,wu2019pc}. 
Intuitively, the path-specific effect can be viewed as quantification of path-specific contribution to model disparity. 
Existing research has only focused on causal paths (paths in which all edges are directed outwards $A$) so far, this may miss other sources of disparity. For example, in Figure \ref{fig:intro_examples}, $A \leftarrow X_1 \rightarrow \hat{Y}$ is an active path linking $A$ and $\hat{Y}$ and thus may contribute to model disparity, but it is not a causal path. Therefore, the sum of the path-specific effects considering only causal paths will not amount to the entire model disparity (e.g., measured by Eq.(\ref{eq:disparity})), which leads to incomplete explanations.

In this paper, we propose a novel algorithmic fairness explanation method called Fairness-Aware Causal paTh decompoSition (FACTS), which is introduced in the next section.

\section{Methodology}
In this section, we will introduce our framework with $\Delta_{DP}$ as the model disparity metric. We provide generalizations of our framework to other model disparity metrics in the appendix. 

Our framework is based on a causal graph $\g$, which could be obtained based on domain knowledge or learned from the training data with existing causal discovery algorithms \cite{spirtes2000causation}. According to Section \ref{sec:fairness}, active paths from $A$ to $\hat{Y}$ are sources of $\Delta_{DP}$. If there is no active path between $A$ and $\hat{Y}$, $\Delta_{DP} = 0$. If $\g$ is a DAG, we can identify all active paths based on the definition above and analyze their contributions to disparity. However, in reality $\g$ could be a PDAG, where the causal directions on some edges cannot be determined \cite{borboudakis2012incorporating,perkovic2020identifying}. This makes the problem more challenging.

In this section, we first propose a corresponding concept {\em potential active path} to represent the correlation relations between $A$ and $\hat{Y}$ under a PDAG, and each potential active path between $A$ and $\hat{Y}$ is referred to as a {\em fairness associated causal path} (FACT) in this paper. Then we propose an algorithm to extract all potential active paths from $A$ to $\hat{Y}$. Finally, we decompose $\Delta_{DP}$ as the sum of the contributions of these paths following the Shapley values strategy. 

\begin{figure}[tbp]
	\centering
	\includegraphics[width=3.2in]{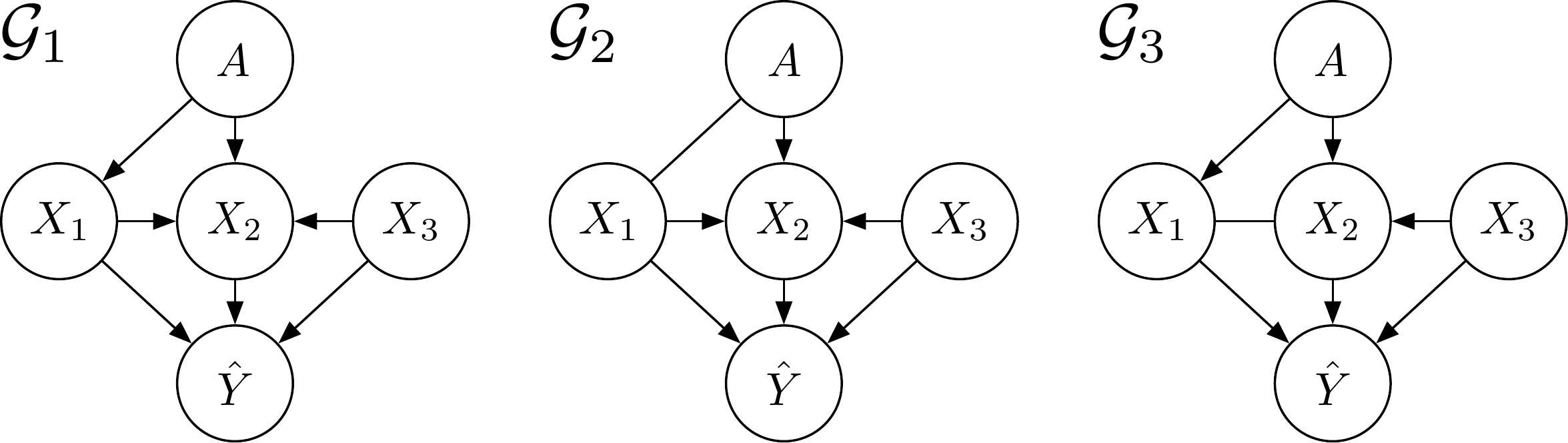}
	\caption{Three examples of PDAG. }
	\label{fig:example_pos_active}
\end{figure}

\subsection{Potential Active Paths from $A$ to $\hat{Y}$}
Potential active paths on a PDAG are defined as follows:

\noindent \textbf{Potential Active Paths.} A \textit{path} $p$ in $\g$ is a potential active path if one of following properties is satisfied:
\begin{enumerate}
	\item All edges on $p$ are directed and $p$ satisfies the definition of active path in Section \ref{sec:causal_models}. 
	\item $p$ contains undirected edges. If we add arbitrary directions to all undirected edges (adjacent node pairs) on $p$, the resulting path is active. 
	\item $p$ contains undirected edges. Considering all possible directions of undirected edges in $\g$, there exists at least one situation to obtain a DAG $\g'$ which satisfies:  1) corresponding path obtained from $p$ in $\g'$ is active; 2) the conditional independence relationships among variables encoded in $\g'$ are consistent with those inferred from observational data. 
\end{enumerate}

We illustrate the above definition with examples in Figure \ref{fig:example_pos_active}. For path $A \rightarrow X_1 \rightarrow X_2 \rightarrow \hat{Y}$ in ${\g}_1$, since it is an active path by definition, it is a potential active path as well. As for $A - X_1 \rightarrow X_2 \rightarrow \hat{Y}$ in ${\g}_2$, since no matter what direction of $A-X_1$ is, the resulting path $A \rightarrow X_1 \rightarrow X_2 \rightarrow \hat{Y}$ or $A \leftarrow X_1 \rightarrow X_2 \rightarrow \hat{Y}$ is active. Thus $A - X_1 \rightarrow X_2 \rightarrow \hat{Y}$ in ${\g}_2$ is a potential active path. For $A \rightarrow X_1 - X_2 \rightarrow \hat{Y}$ in ${\g}_3$, suppose the conditional independence relation obtained from data is $A \nind X_1 | X_2, A \nind X_2 | X_1, X_1 \nind X_2 | A$. Consider the directions of all undirected edges to be $X_1 \rightarrow X_2$, the corresponding path $A \rightarrow X_1 \rightarrow X_2 \rightarrow \hat{Y}$ in ${\g}_3$ under this case is active and the relationships among variables is consistent with the conditional independence relation observed from the data. So $A \rightarrow X_1 - X_2 \rightarrow \hat{Y}$ is a potential active path.

When $\g$ is DAG, the potential active path is equivalent to the active path. The potential active paths satisfy the following property:
\begin{proposition}
	\label{prop1}
	If there is no potential active path between two variables on $\g$, then the two variables are independent.
\end{proposition}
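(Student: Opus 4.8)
The plan is to prove the contrapositive: if two variables $X_i$ and $X_j$ are dependent, then $\g$ contains a potential active path between them. Throughout I read ``independent'' in the marginal sense (conditioning set $\mathbf{C} = \emptyset$), which is exactly the convention fixed for ``active path'' in Section~\ref{sec:causal_models}. The entire argument rests on the standard premise that a PDAG represents the equivalence class containing the true data-generating DAG: there is a DAG $\g^{*}$ obtained from $\g$ by orienting every undirected edge (while leaving all already-directed edges unchanged) such that $\g^{*}$ is faithful to the observed distribution.

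First I would invoke the faithfulness assumption, specialized to $\mathbf{C} = \emptyset$. If $X_i \nind X_j$, then applying faithfulness to the true DAG $\g^{*}$ yields an active path $p^{*}$ from $X_i$ to $X_j$ in $\g^{*}$ relative to $\emptyset$, i.e.\ a path along which no node is a collider. I would then transport $p^{*}$ back to $\g$: since $\g^{*}$ and $\g$ share the same skeleton, the node sequence of $p^{*}$ is also a path $p$ in $\g$. The key local observation is that, because $\mathbf{C} = \emptyset$, whether a node is a collider on a path depends only on the orientations of the two path-edges incident to it, not on the rest of the graph.

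Next I would match $p$ to one of the defining clauses of a potential active path. If every edge of $p$ is already directed in $\g$, those orientations coincide with their orientations in $\g^{*}$, so the collider structure of $p$ equals that of $p^{*}$; hence $p$ is active and clause~(1) applies. If $p$ contains undirected edges, then $\g^{*}$ is precisely a consistent orientation of $\g$ in which the corresponding path is active and whose conditional independence relations agree with the data, so clause~(3) applies. In either case a potential active path between $X_i$ and $X_j$ exists, contradicting the hypothesis; therefore no potential active path forces $X_i \ind X_j$. (Note that clause~(2) is never needed for this direction, since it is subsumed by clause~(3) when the path is active under the particular orientation $\g^{*}$.)

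I expect the main obstacle to be justifying the bridging premise that $\g^{*}$ is a legitimate orientation of $\g$ serving as a clause-(3) witness. This is where the semantics of the PDAG as an equivalence-class representative must be pinned down precisely: directed edges are invariant across the class, undirected edges are exactly the unresolved ones, and the true orientation introduces no new v-structures or directed cycles, so that $\g^{*}$ is both a DAG and faithful. A secondary point to handle carefully is the reduction to $\mathbf{C} = \emptyset$ together with the locality of collider status, which is what lets the activeness of $p^{*}$ in $\g^{*}$ transfer to $p$ in $\g$ without re-examining the global graph.
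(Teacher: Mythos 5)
Your proposal is correct and follows essentially the same route as the paper's own proof: both invoke faithfulness together with the existence of a consistent DAG orientation of the PDAG (your $\g^{*}$, the paper's $\g^{'}$) to obtain an active path there, then pull it back to $\g$ and observe that the corresponding path satisfies the definition of a potential active path. Your version is merely a contrapositive restatement of the paper's contradiction argument, with a somewhat more careful case check against clauses (1) and (3) of the definition.
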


This proposition shows the importance of potential active paths between $A$ and $\hat{Y}$ when we consider $\Delta_{DP}$. Since if there is no potential active path between $A$ and $\hat{Y}$, $A \ind \hat{Y}$ and $\Delta_{DP} = 0$.

In ordering to search potential active paths more efficiently, we have the following proposition:
\begin{proposition}
	\label{prop2}
	If $p$ is a potential active path in $\g$, then every subpath of $p$ is also a potential active path.
\end{proposition}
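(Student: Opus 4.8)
The plan is to reduce the statement to a single combinatorial observation about colliders and then dispatch the three defining cases of a potential active path separately. Throughout, recall that since we work relative to $\mathbf{C} = \emptyset$, a path being \emph{active} means precisely that it contains no collider, and whether a node is a collider depends only on the orientations of the two path-edges incident to it. Writing $p = [V_0,\dots,V_n]$, a subpath $p'$ is a contiguous block $[V_i,\dots,V_j]$. The core lemma I would establish first is: \emph{if a fully directed path $q$ is active, then every subpath $q'$ of $q$ is active}. This is immediate — each interior node of $q'$ keeps the same two incident edges it had in $q$, hence remains a non-collider, while its endpoints have at most one incident edge within $q'$ and so cannot be colliders; thus $q'$ has no collider.

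With this lemma, Case 1 (where $p$ satisfies Property 1) is direct: a subpath $p'$ inherits only directed edges and is active by the lemma, so $p'$ again satisfies Property 1. For Case 2 (Property 2, where every orientation of the undirected edges on $p$ yields an active path) I would split on whether $p'$ still contains an undirected edge. If $p'$ is fully directed, fix any orientation of the undirected edges of $p$; this makes $p$ active, and since $p'$'s edges are unaffected, the lemma gives that $p'$ is active, so Property 1 holds for $p'$. If $p'$ retains undirected edges, take an arbitrary orientation $\sigma'$ of them, extend it to an orientation $\sigma$ of all undirected edges of $p$, invoke Property 2 to get $p$ active under $\sigma$, and restrict via the lemma to conclude $p'$ is active under $\sigma'$; as $\sigma'$ was arbitrary, $p'$ satisfies Property 2.

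Case 3 (Property 3) I would handle by reusing, verbatim, the single graph $\g'$ that witnesses Property 3 for $p$: it orients \emph{all} undirected edges of $\g$, it is a DAG, and its conditional independence relations are consistent with the data. The image of $p$ in $\g'$ is active, so by the lemma the image of any subpath $p'$ is active in $\g'$ as well. If $p'$ is fully directed in $\g$ its orientation coincides with that in $\g'$, so $p'$ is active in $\g$ and satisfies Property 1; otherwise the very same $\g'$ witnesses Property 3 for $p'$, since the two conditions on $\g'$ are unchanged and only the activeness clause needs re-checking on the subpath. The one point requiring care — and the main obstacle I would guard against — is precisely the behaviour at the endpoints of the subpath: a node that was an interior non-collider of $p$ might be feared to misbehave once truncated, but the observation that an endpoint carries at most one path-edge and is therefore never a collider is exactly what makes every case go through. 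The remaining bookkeeping (that Property 3's data-consistency clause transfers for free because the witnessing graph is identical, and that the orientation extension in Case 2 is well defined) is routine.
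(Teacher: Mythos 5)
Your proof is correct and takes essentially the same route as the paper's: the same key lemma (every subpath of an active, fully directed path is active, since interior nodes keep both incident edges and endpoints carry at most one path-edge and so cannot be colliders), followed by a case split over the three defining properties of a potential active path. If anything you are more careful than the paper, whose proof dismisses Properties 2 and 3 with a bare ``similarly'' and does not make explicit, as you do, that a subpath $p'$ of $p$ may contain no undirected edges and then satisfies Property 1 rather than the same property that $p$ did, nor that in Case 3 the same witnessing DAG $\g'$ can be reused verbatim for $p'$.
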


The proofs of propositions are provided in the appendix. Based on this proposition, we propose an algorithm to search all potential active paths from $A$ to $\hat{Y}$ as demonstrated in Algorithm \ref{alg:search}.

\begin{algorithm}
	\raggedright
	\caption{Search Potential Active Paths from $A$ to $\hat{Y}$
		\label{alg:search}
	}
	\textbf{Input:} A PDAG $\g$, Dataset $\{A, X\}$\\
	\textbf{Output:} A set of potential active paths from $A$ to $\hat{Y}$: $\mathbf{P}$, A set of features involved in $\mathbf{P}$: $\mathbf{X}\left(\mathbf{P}\right)$\\
	\textbf{Initialization:} \\
	$\mathbf{P}_{A\rightarrow} = \{$Path directly connects $A$ and $X_i$ : $X_i \in \Adj(A)\}$, $\mathbf{P} = \emptyset$\\
	$\mathbf{X}\left(\mathbf{P}\right) = \Adj(A)$ \Comment{$\Adj(A)$ is the set of nodes adjacent to $A$ on $\g$.}\\
	(Here $\mathbf{P}_{A\rightarrow}$ is a temporary set to store the potential active paths from $A$ during searching)\\
	\begin{algorithmic}[1]
		\While{$\mathbf{P}_{A\rightarrow} \neq \emptyset$}
		\State Let $p \in \mathbf{P}_{A\rightarrow}$
		\State Remove $p$ from $\mathbf{P}_{A\rightarrow}$
		\For{$X \gets \Adj(p[-1])$} \Comment{$p[-1]$ means the last node of $p$}
		\State $p' = p + X$
		\If{$p'$ is a potential active path by definition}
		\State $\mathbf{X}\left(\mathbf{P}\right) = \mathbf{X}\left(\mathbf{P}\right) \cup \{X\}$
		\If{$X$ is not $\hat{Y}$}
		\State $\mathbf{P}_{A\rightarrow} = \mathbf{P}_{A\rightarrow} \cup \{p'\}$
		\Else
		\State $\mathbf{P} = \mathbf{P} \cup \{p'\}$ 
		\EndIf
		\EndIf
		\EndFor
		\EndWhile
	\end{algorithmic}
\end{algorithm}

Table \ref{table:notation} summarizes the notations that will be needed for the follow up presentations. As an example in ${\g}_1$ of Figure \ref{fig:example_pos_active}, $\mathbf{P} = \{A \rightarrow X_1 \rightarrow \hat{Y}, A \rightarrow X_2 \rightarrow \hat{Y}, A \rightarrow X_1 \rightarrow X_2 \rightarrow \hat{Y} \}$, $\mathbf{X}\left(\mathbf{P}\right) = \{X_1, X_2\}, \bar{\mathbf{X}}\left(\mathbf{P}\right) = \{X_3\}$. We will have $\bar{\mathbf{X}}\left(\mathbf{P}\right) \ind A$.

\begin{table}[ht]
	\caption{Important Math Notations}
	\begin{center}  
		\begin{tabular}{c|c}
			\toprule
			\textbf{Notation} & \textbf{Meaning} \\
			\midrule
			$\mathbf{P}$ & The set of potential active paths from $A$ to $\hat{Y}$ \\
			$\mathbf{X}\left(\mathbf{P}\right)$ & The set of features involved in $\mathbf{P}$  \\
			$\bar{\mathbf{X}}\left(\mathbf{P}\right)$ & Set of features not involved in $\mathbf{P}$  \\
			${\phi}_{f(\mathbf{x})}(p_i)$ & Contribution of path $p_i$ to $f(\mathbf{x})$ \\
			${\Phi}(p_i)$ & Contribution of path $p_i$ to $\Delta_{DP}$ \\
			\bottomrule
		\end{tabular}  
	\end{center} 
	\label{table:notation}
\end{table}

\noindent \textbf{Ordering Relationships with respect to $A$.} For two nodes $X_i$ and $X_j$ in $\mathbf{X}\left(\mathbf{P}\right)$, $X_i$ is defined to be prior to $X_j$ with respect to $A$ if:
\begin{enumerate}
	\item There exist at least one potential active paths from $A$ to $\hat{Y}$ that both $X_i$ and $X_j$ are on these paths;
	\item For all potential active paths from $A$ to $\hat{Y}$ containing both $X_i$ and $X_j$, $X_i$ precedes $X_j$ on them.
\end{enumerate}
\noindent In the following, we write such relationship as $X_i \succ_{A} X_j$. We also define: $\forall X_i, A \succ_{A} X_i, X_i \succ_{A} \hat{Y}$. 

If $X_j \succ_{A} X_i$ and $X_j$ is adjacent to $X_i$, we call $X_i$ is a {\em direct successor} of $X_j$ with respect to $A$, and $X_j$ is a {\em direct predecessor} of $X_i$ with respect to $A$. The set of all direct successors of $X_i$ w.r.t. $A$ is denoted by $\Ds_{A}(X_i)$. The set of all direct predecessors of $X_i$  w.r.t. $A$ is denoted by $\Dp_{A}(X_i)$. For ${\g}_1$ in Figure \ref{fig:example_pos_active}, we have $\Dp_{A}(X_1) = \{A\}, \Dp_{A}(X_2) = \{A, X_1\}$. We will write $\Dp_{A}(X_i)$ as ${\Dp}_i$ and $\Ds_{A}(X_i)$ as ${\Ds}_i$ for simplicity in the following. 

\noindent \textbf{Completely Ordered with respect to $A$.} $\mathbf{X}\left(\mathbf{P}\right)$ is defined to be completely ordered with respect to $A$ on $\g$ if $\forall X_i, X_j \in \mathbf{X}\left(\mathbf{P}\right), X_i \in \Adj(X_j)$, one of them must be the direct successor or predecessor of the other. This can also be written as $\forall X_i, X_j \in \mathbf{X}\left(\mathbf{P}\right), X_j \in \Adj(X_i)$, we have $X_j \in {\Dp}_i \cup {\Ds}_i$. Considering 
a special case where $\g$ is a DAG, we have the following proposition. 
\begin{proposition}
	\label{prop3}
	If $\g$ is a DAG and no other node is the parent of $A$, then $\mathbf{X}\left(\mathbf{P}\right)$ is completely ordered with respect to $A$ on $\g$.
\end{proposition}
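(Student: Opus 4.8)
The plan is to first pin down the shape of the members of $\mathbf{P}$ under the stated hypotheses, and then verify the defining condition of ``completely ordered with respect to $A$'' directly for an arbitrary adjacent pair. The key structural fact I would establish first is a lemma: when $\g$ is a DAG and no node is a parent of $A$, every path in $\mathbf{P}$ is a directed (causal) path $A \to \cdots \to \hat{Y}$. Since $\g$ is a DAG, a potential active path coincides with an ordinary active path, i.e.\ a collider-free path. Because no node is a parent of $A$, every edge incident to $A$ is outgoing, so any path from $A$ begins with $A \to X_{k_1}$. I would then argue by induction along the path that this orientation propagates forward: if the edge entering the current node points forward, the next edge must also point forward, since otherwise that node would carry two incoming arrowheads on the path and be a collider, contradicting activeness. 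Hence each member of $\mathbf{P}$ is directed from $A$ to $\hat{Y}$; conversely every directed $A$-to-$\hat{Y}$ path is collider-free, so $\mathbf{P}$ is exactly the set of directed paths from $A$ to $\hat{Y}$.

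Next I would take any two adjacent nodes $X_i, X_j \in \mathbf{X}(\mathbf{P})$; since $\g$ is a DAG the edge between them is directed, say $X_i \to X_j$, and I would show $X_i \succ_{A} X_j$. For the ordering condition (item 2 in the definition of $\succ_{A}$), note that on any path of $\mathbf{P}$ containing both nodes --- a directed path by the lemma --- the nodes appear in a topological order, so they cannot occur in the order $X_j$ before $X_i$: that would give a directed subpath $X_j \to \cdots \to X_i$, which together with the edge $X_i \to X_j$ forms a directed cycle, impossible in a DAG. Thus $X_i$ precedes $X_j$ on every such path.

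For the existence condition (item 1), I would exhibit a single path in $\mathbf{P}$ through both nodes by concatenation. Since $X_i \in \mathbf{X}(\mathbf{P})$ there is a directed path from $A$ to $\hat{Y}$ through $X_i$, whose initial segment is a directed path $A \to \cdots \to X_i$; appending the edge $X_i \to X_j$ and then the edge $X_j \to \hat{Y}$ (which exists because $\hat{Y}$ is a child of every feature) yields a directed walk from $A$ to $\hat{Y}$. The crucial observation is that in a DAG a directed walk never repeats a node, since a repeat would create a directed cycle; hence this walk is automatically a simple directed path, and being collider-free it lies in $\mathbf{P}$. It contains both $X_i$ and $X_j$, which completes the verification that $X_i \succ_{A} X_j$. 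As $X_i$ and $X_j$ are adjacent, this means $X_j \in \Ds_i$, so $X_j \in \Dp_i \cup \Ds_i$; since the adjacent pair was arbitrary, $\mathbf{X}(\mathbf{P})$ is completely ordered with respect to $A$.

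I expect the main obstacle to be the existence step: one must produce a single active path witnessing the comparability of $X_i$ and $X_j$, and the clean route is the concatenation argument combined with the fact that directed walks in a DAG are automatically simple. By contrast, the structural lemma and the acyclicity argument establishing the direction of the order should be routine once the directedness of all members of $\mathbf{P}$ is in hand.
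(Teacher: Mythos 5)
Your proof is correct and takes essentially the same route as the paper's: the paper likewise verifies comparability of an adjacent pair $X_i \to X_j$ by concatenating a prefix $A \to \dots \to X_i$ with $X_i \to X_j \to \hat{Y}$ (using that $\hat{Y}$ is a child of every feature) and rules out the reversed ordering via a directed-cycle/collider contradiction, relying implicitly on your directedness lemma through the remark that $A$ is a source node. If anything, you are slightly more careful than the paper, which states the directedness of the paths only notationally and does not explicitly check that the concatenated walk is simple, a point your directed-walks-in-a-DAG observation settles.
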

\noindent where the condition is sufficient but not necessary for completely ordering w.r.t. $A$. For example, as in ${\g}_2$ of Figure \ref{fig:example_pos_active}, even though the edge $X_{1}-A$ is undirected, we can still have that $\mathbf{X}\left(\mathbf{P}\right)$ is completely ordered with respect to $A$.

\subsection{FACT Decomposition of Model Disparity}
\label{sec:path_shap_dis}
In this subsection, we propose an algorithm to quantitatively attribute the model disparity $\Delta_{DP}$ to individual FACTs. We first present the algorithm for the case when $\mathbf{X}\left(\mathbf{P}\right)$ is completely ordered with respect to $A$ on $\g$. The extension of our algorithm to the scenario when the completely-ordered condition does not hold is introduced in Section \ref{sec:algext}. 

Specifically, our algorithm is based on the Shapley values strategy in Section \ref{sec:shap}, and our goal is to decompose $\Delta_{DP}$ as the sum of contributions from the paths in $\mathbf{P}$ as $\Delta_{DP}=\sum_{p_i\in\mathbf{P}}\Phi_{f}(p_i)$, and
\begin{equation}
	\label{eq:dp_fact}
	\Phi_{f}(p_i) = \mathbb{E}_{\mathbf{X}|A= 1}[\phi_{f(\mathbf{x})}(p_i)] - \mathbb{E}_{\mathbf{X}|A=0}[\phi_{f(\mathbf{x})}(p_i)]
\end{equation}
where $\phi_{f(\mathbf{x})}(p_i)$ is the Shapley value of FACT $p_i$. Following Eq.(\ref{eq:shapley_fea}), we can define $\phi_{f(\mathbf{x})}(p_i)$ as
\begin{equation}
	\label{eq:calculation}
	\begin{aligned}
		& \phi_{f(\mathbf{x})}(p_i) = \\
		& \sum_{\pi \in \Pi} \frac{v_{f(\mathbf{x})}(\{p_j: {\pi}(p_j) \leq {\pi}(p_i)\}) - v_{f(\mathbf{x})}(\{p_j: {\pi}(p_j) < {\pi}(p_i)\})}{|\Pi|}
	\end{aligned}
\end{equation}
where $\pi$ is a permutation function for all FACTs, and $\Pi$ is the collection of all permutations. $v_{f(\mathbf{x})}(\pathsubset)$ is a value function defined on a set of FACTs $\pathsubset\subset\mathbf{P}$. 
In order to appropriately define $v_{f(\mathbf{x})}(\pathsubset)$, we need to leverage the causality structure encoded in $\mathbf{P}$. In particular, $\mathbf{P}$ can be seen as a system which transfer the information of $A$ downstreams and finally affect the value of $\hat{Y}$. So an intuitive idea is to formulate this system as a calculation process started from $A$, passing through FACTs and finally get the model prediction $f(\mathbf{x})$. During the inference process, each $X_i \in \mathbf{X}\left(\mathbf{P}\right)$ can be estimated as
\begin{equation}
	\label{eq:regression}
	X_i = g_{X_i}\left(\Dp_{i}, \bar{\mathbf{X}}\left(\mathbf{P}\right), E_i\right)
\end{equation}
where $g_{X_i}$ is the regression link function, $\Dp_{i}$ is the set of predecessors of $X_i$ in $\mathbf{P}$, $\bar{\mathbf{X}}\left(\mathbf{P}\right)$ is the set of feature variables that are not in $\mathbf{X}(\mathbf{P})$, $E_i$ is the random regression error. We assume $\{E_i\}$ are mutually independent and each $E_i$ is independent of $\Dp_{i}$ and $\bar{\mathbf{X}}\left(\mathbf{P}\right)$. Hyvarinen {\em et al.} \cite{hyvarinen1999nonlinear} proved that we can always construct such $\{g_{X_i}\}$ and $\{E_i\}$. In the following we present the calculation process with a concrete example.

Considering ${\g}_1$ in Figure \ref{fig:example_pos_active}, we have  $\mathbf{X}\left(\mathbf{P}\right) = \{X_1, X_2\}, \bar{\mathbf{X}}\left(\mathbf{P}\right) = \{X_3\}$, $\mathbf{P} = \{A \rightarrow X_1 \rightarrow \hat{Y}, A \rightarrow X_2 \rightarrow \hat{Y}, A \rightarrow X_1 \rightarrow X_2 \rightarrow \hat{Y} \}$. According to Eq.(\ref{eq:regression}), we also have $X_1 = g_{X_1}(A,X_3,E_1)$, $\mathbf{Dp}_2=X_1$ and $X_2 = g_{X_2}(A,X_1,X_3,E_2)$. For calculating $v_{f(\mathbf{x})}(\pathsubset)$, we first consider two extreme cases.
\begin{itemize}
	\item $\pathsubset = \mathbf{P}$. In this case, the actual value of $A$ is visible to all FACTs, which makes $v_{f(\mathbf{x})}(\pathsubset)=f(\mathbf{x})$.
	
	\item $\pathsubset = \emptyset$. In this case, the actual value of $A$ is visible to none of the FACTs. We can sample $A$ from its marginal distribution $a' \sim P(A)$ and calculate $X_i$ under $A = a'$, denoted as $x_i(a')$, then we calculate $v_{f(\mathbf{x})}(\pathsubset)$ as
	\begin{equation}
		\begin{aligned}
			& x_1(a') = g_{X_1}(a',x_3,e_1) \\
			& x_2(a') = g_{X_2}(a',x_1(a'),x_3,e_2) \\
			&v_{f(\mathbf{x})}(\mathbf{T}) = \mathbb{E}_{a' \sim P(A)} f(x_1(a'), x_2(a'), x_3)
		\end{aligned}
	\end{equation}
\end{itemize}

The case of $\pathsubset\subset\mathbf{P}$ is more complicated. We denote $x_i(a'|\pathsubset)$ as the value of $X_i$ with $\pathsubset$. In such process, the values of $A$ which pass through $\pathsubset$ will be set to $a$, while those passing through $\mathbf{P} \setminus \pathsubset$ will be set to $a' \sim P(A)$. Considering the situation $\pathsubset = \{A \rightarrow X_1 \rightarrow X_2 \rightarrow \hat{Y}\}$ in the example, we need to transfer the information of $A = a$ along the path $A \rightarrow X_1 \rightarrow X_2 \rightarrow \hat{Y}$ but block this information and use a random sample $a'$ along other paths $A \rightarrow X_1 \rightarrow \hat{Y}$ and $A \rightarrow X_2 \rightarrow \hat{Y}$: 
\begin{equation}
	\begin{aligned}
		& x_1(a'|\mathbf{T}) = g_{X_1}(a',x_3,e_1) \\
		& x_2(a'|\mathbf{T}) = g_{X_2}(a',x_1,x_3,e_2) \\
		& v_{f(\mathbf{x})}(\pathsubset) = \mathbb{E}_{a' \sim P(A)} f(x_1(a'|\mathbf{T}), x_2(a'|\mathbf{T}), x_3)
	\end{aligned}
\end{equation}
In this way, we can calculate $\phi_{f(\mathbf{x})}(p_i)$ as in Eq.(\ref{eq:calculation}).

In practice, we can choose $f(\mathbf{x})$ as the probability to predict $\mathbf{x}$ to be positive or the binary decision with a threshold on the probability. In the latter case, the decomposed Shapley values of $\Delta_{DP}$ on FACTs satisfy the following properties (detailed proofs are provided in the appendix).

\begin{enumerate}
	\item (Efficiency) $\sum_{p_i \in \mathbf{P}} \Phi_{f}(p_i) = \Delta_{DP}$
	\item (Linearity) $\Phi_{\alpha f_1 + \beta f_2}(i) = \alpha \Phi_{f_1}(i) + \beta \Phi_{f_2}(i)$;
	\item (Nullity) $\Phi_{f}(p_i)$ = 0 when $v_{f(\mathbf{x})}(\pathsubset \cup \{p_i\}) = v_{f(\mathbf{x})}(\pathsubset),~\forall \mathbf{x}, \pathsubset \subset \mathbf{P} \backslash p_i$
\end{enumerate}

Our path explanations can also be aggregated to generate feature-level explanations. To obtain the contribution of feature $X_i$ to $\Delta_{DP}$, we can sum the path contributions for all paths ended with "$\dots X_i \rightarrow \hat{Y}$". 

\subsection{Algorithm Implementation}
\label{sec:algext}
When $\g$ is not completely ordered, some of FACTs could be contradictory to each other. Considering ${\g}_3$ in Figure \ref{fig:example_pos_active}, we have $\mathbf{P} = \{A \rightarrow X_1 \rightarrow \hat{Y}, A \rightarrow X_2 \rightarrow \hat{Y}, A \rightarrow X_1 - X_2 \rightarrow \hat{Y}, A \rightarrow X_2 - X_1 \rightarrow \hat{Y}\}$. 
In this case, $A \rightarrow X_2 - X_1 \rightarrow \hat{Y}$ and $A \rightarrow X_1 - X_2 \rightarrow \hat{Y}$ cannot be active simultaneously.

One solution is to consider all orientation possibilities of undirected edges. For example, the direction of $X_1 - X_2$ in ${\g}_3$ could be either $X_1 \leftarrow X_2$ or $X_1 \rightarrow X_2$. We can study each situation respectively and then summarize them.  However, this makes the exploration space potentially huge (suppose we have $N_{u}$ undirected edges, then we can have $2^{N_{u}}$ orientation possibilities to explore).

We propose to solve this problem by grouping the adjacent feature variables which cause the inconsistency problem. In the example of ${\g}_3$ in Figure \ref{fig:example_pos_active}, if we group $X_1$ and $X_2$ as ${\chi}_1 = \{X_1, X_2\}$ and treat it as a single variable, then ${\g}_3$ satisfies the completely-ordered condition. Therefore, we propose to first obtain a partition of $\mathbf{X}\left(\mathbf{P}\right)$ as $\chi\left(\mathbf{P}\right) = \{{\chi}_1, \dots, {\chi}_K\}$ so that $\chi\left(\mathbf{P}\right)$ is completely ordered with respect to $A$ with these grouped variables $\{{\chi}_k\}_{k=1}^K$. Then we can calculate the contributions of the paths with group-level variables. The concrete algorithm implementation steps are as follows.

\noindent \emph{\underline{Step 1: Identify Potential Active Paths from $A$ to $\hat{Y}$}}. With $\g$, we find all potential active paths from $A$ to $\hat{Y}$ as FACTs with Algorithm \ref{alg:search}. 

\noindent \emph{\underline{Step 2: Generate Groups Completely Ordered With Respect To $A$}}. We group the feature variables so that the grouped variables are completely ordered with respect to $A$ with Algorithm \ref{alg:grouping}.

\noindent \emph{\underline{Step 3: Calculate Path Contributions to $\Delta_{DP}$}}. For each group level variable ${\chi}_i$, we learn a prediction link function $g_{{\chi}_i}$ and obtain the error term $\mathbf{E}_{{\chi}_i}$ (both $g_{{\chi}_i}$ and $\mathbf{E}_{{\chi}_i}$ are multi-dimensional, with each dimension corresponding to an individual feature variable in ${\chi}_i$). Finally we calculate the path contribution on the group-level with the procedure in Section \ref{sec:path_shap_dis}.

\begin{algorithm}
	\raggedright
	\caption{Generate Groups Completely Ordered w.r.t $A$\label{alg:grouping}}
	\textbf{Input:} A PDAG $\g$, $\mathbf{P}$, $\mathbf{X}\left(\mathbf{P}\right)$\\
	\textbf{Output:} A division of $\mathbf{X}\left(\mathbf{P}\right)$ into groups ${\chi}_1,\dots,{\chi}_K$, A set of active paths on group-level $\mathbf{P}'$\\
	\textbf{Initialization:} \\
	Create groups $\{{\chi}_{i}\}$, each ${\chi}_{i}$ containing $X_i \in \mathbf{X}\left(\mathbf{P}\right)$ and path set $\mathbf{P}'$ by replacing $X_i$ with ${\chi}_i$ for all paths in $\mathbf{P}$\\
	
	\begin{algorithmic}[1]
		\While{$\{{\chi}_{i}\}$ is not completely ordered w.r.t A}
		\State Let ${\chi}_{i}: {\chi}_{j} \in \Adj({\chi}_{i}), {\chi}_{j} \notin \Dp({\chi}_{i}) \cup \Ds({\chi}_{i})$
		\State Merge ${\chi}_{i}$ and all ${\chi}_{j} \in \Adj({\chi}_{i}) \setminus \Dp({\chi}_{i}) \cup \Ds({\chi}_{i})$, get ${\chi}^{*}$
		\State \Call{UpdateRelation}{${\chi}^{*},\mathbf{P}'$}
		\EndWhile
		
		\Function{UpdateRelation}{${\chi}^{*},\mathbf{P}'$}
		\For{$p \gets \mathbf{P}'$}
		\If{$p$ contains node ${\chi}_i:{\chi}_i
			\subset {\chi}^{*}$}
		\State Replace all ${\chi}_i$ (${\chi}_i
		\subset {\chi}^{*}$) by ${\chi}^{*}$ on $p$
		\State Merge repeated ${\chi}^{*}$ on $p$
		\EndIf 
		\EndFor
		\State Remove repeated paths in $\mathbf{P}'$
		\EndFunction
	\end{algorithmic}
	
\end{algorithm}

\subsection{Fair Learning Through FACT Selection}
\label{sec:fair_learning}
With the FACTs based model disparity decomposition approach, we can obtain the quantitative contribution of each FACT to the model disparity. At the same time, if we also consider the model utility, then we can select the paths with high model utility and low model disparity contributions when building the model.

Without the loss of generality, we assume the outcome variable $Y\in\{0,1\}$ and the prediction model $f(\mathbf{x})$ can return the prediction of $\mathbf{x}$ belonging to 1, then the utility of $f$ can be estimated as
\begin{equation}
	\mathcal{U}(f)=\mathbb{E}_{\mathbf{X},Y}[y f(\mathbf{x}) + (1-y)(1-f(\mathbf{x}))]
\end{equation}

For a given data sample $\{\mathbf{x},y\}$, we can calculate the specific model utility of this sample as
\begin{equation}
	\mathcal{U}(f(\mathbf{x}))=y f(\mathbf{x}) + (1-y)(1-f(\mathbf{x}))
\end{equation}

We can decompose $\mathcal{U}(f(\mathbf{x}))$ as the the contributions of FACTs in $\mathbf{P}$. Similar to Eq.(\ref{eq:calculation}), we define $\psi_{f(\mathbf{x}), y}(p_i)$ as
\begin{small}
	\begin{equation}
		\label{eq:calculation_acc}
		\begin{aligned}
			& \psi_{f(\mathbf{x}), y}(p_i) = \\
			& \sum_{\pi \in \Pi} \frac{v_{f_y(\mathbf{x})}(\{p_j: {\pi}(p_j) \leq {\pi}(p_i)\}) - v_{f_y(\mathbf{x})}(\{p_j: {\pi}(p_j) < {\pi}(p_i)\})}{|\Pi|}
		\end{aligned}
	\end{equation}
\end{small}
\noindent where $f_y(\mathbf{x}) = f(\mathbf{x})$ if $y = 1$ and $f_y(\mathbf{x}) = 1 - f(\mathbf{x})$ otherwise. In this way, the contribution of $p_i$ to $\mathcal{U}(f)$ is
\begin{equation}
	\Psi_{f}(p_i) = \mathbb{E}_{\mathbf{X},Y}[\psi_{f(\mathbf{x}),y}(p_i)]
\end{equation}

Thus $\mathcal{U}(f)$ can be decomposed as
\begin{equation}
	\Psi_{f}(\emptyset) + \sum\nolimits_{p_i \in \mathbf{P}} \Psi_{f}(p_i)
\end{equation}
where $\Psi_{f}(\emptyset) = \mathbb{E}_{\mathbf{X},Y}[ v_{f_y(\mathbf{x})}(\emptyset)]$.

With the decomposition of $\mathcal{U}(f)$ and $\Delta_{DP}$, we can construct an interpretable fair learning algorithm to achieve trade-off between accuracy and fairness through FACT selection. Specifically, our goal is to select a set of paths $\pathsubset^*\subset\mathbf{P}$ by minimizing the objective: 
\begin{equation}
	\label{eq:loss}
	\mathcal{L}(\pathsubset) =  -\sum\nolimits_{p_i \in \pathsubset} \Psi_{f}(p_i) + \lambda |\sum\nolimits_{p_i \in \pathsubset} \Phi_{f}(p_i)| 
\end{equation}

We propose a greedy algorithm shown in Algorithm \ref{alg:fl_searching} to solve this problem by iteratively removing the edges in $\mathbf{P}$ to get $\pathsubset^{*}$. After we obtain $\pathsubset^{*}$, $v_{f(\mathbf{x})}(\pathsubset^{*})$ is used as the new prediction result. 

\begin{algorithm}
	\raggedright
	\caption{Select Paths to Achieve Better Trade-off Between Accuracy and $\Delta_{DP}$\label{alg:fl_searching}}
	\textbf{Input:} $\mathbf{P}$, $\{\Phi_{f}(p_i):p_i \in \mathbf{P}\}$,  $\{\Psi_{f}(p_i):p_i \in \mathbf{P}\}$, $\lambda$\\
	\textbf{Output:} $\pathsubset^{*}$\\
	\textbf{Initialization:} $\pathsubset = \mathbf{P}$\\
	\begin{algorithmic}[1]
		\While{$\pathsubset \neq \emptyset$}
		\State Let $p^{*}= \mathop{\arg\min} \limits_{p \in \pathsubset} \mathcal{L}(\pathsubset \setminus \{p\})$
		\State Remove $p^{*}$ from $\pathsubset$
		\If{$\mathcal{L}(\pathsubset)$ < $\mathcal{L}(\pathsubset^{*})$}
		\State $\pathsubset^{*} = \pathsubset$
		\EndIf
		\EndWhile
	\end{algorithmic}
\end{algorithm}

\section{Experiments}
\subsection{Datasets}
\emph{\underline{Synthetic Data}}: We created a dataset with $10$ features under a DAG $\g$. The feature variables and the sensitive attribute $A$ are randomly connected by directed edges. We generated the data in two different settings: 1) $S_1$: the relation between features and outcome is linear; 2) $S_2$: the relation between features and outcome is non-linear. The detailed data generation process is described in the appendix. 

\noindent \emph{\underline{Adult}} \cite{lichman2013uci}: The Adult dataset consists of 48,842 samples. The task is to predict whether one's annual income is greater than 50K. We consider gender (male, female) as sensitive attribute and age, nationality, marital status, level of education, working class and hours per week as feature variables similar to \cite{zhang2018causal}. We set $Y=1$ for $Income \geq 50K$ and $A=1$ for male.

\begin{table}[thbp]
    \vspace{-0.5em}
	\caption{Normalized root-mean-square error of estimated path contributions.}
	\small
	\begin{tabular}{c|cc|cc}
		\toprule
		\multicolumn{1}{c}{} & \multicolumn{2}{c}{MLP} & 
		\multicolumn{2}{c}{Xgboost} \\
		\hline
		&  PSE & Ours & PSE & Ours \\
		\hline
		$S_1$ & 0.09 ($\pm$ 0.03) & {\bf 0.06} ($\pm$ 0.02) & 0.07 ($\pm$ 0.02) & 0.07 ($\pm$ 0.01) \\
		$S_2$ & 0.14 ($\pm$ 0.03) & {\bf 0.10} ($\pm$ 0.03) & 0.20 ($\pm$ 0.07) & {\bf 0.10} ($\pm$ 0.07)\\
		\bottomrule
	\end{tabular}
	\label{tab:est_dp_ctb}
    \vspace{-1.0em}
\end{table}

\begin{table}[thbp]
    \vspace{-0.5em}
	\caption{Normalized absolute difference of the summation of path contributions and $\Delta_{DP}$. If the difference equals 0, the $\Delta_{DP}$ can be completely explained by the contributions.}
	\small
	\begin{tabular}{c|cc|cc}
		\toprule
		\multicolumn{1}{c}{} & \multicolumn{2}{c}{MLP} & 
		\multicolumn{2}{c}{Xgboost} \\
		\hline
		&  PSE & Ours & PSE & Ours \\
		\hline
		$S_1$ & 0.05 ($\pm$ 0.03) & {\bf 0.01} ($\pm$ 0.01) &  0.04 ($\pm$ 0.02) & {\bf 0.01} ($\pm$ 0.01) \\
		$S_2$ & 0.09 ($\pm$ 0.04) & {\bf 0.01} ($\pm$ 0.01) & 0.05 ($\pm$ 0.03) & {\bf 0.01} ($\pm$ 0.01)\\
		Adult & 0.23( $\pm$ 0.04) & {\bf 0.07} ($\pm$ 0.01) & 0.29($\pm$ 0.02) & {\bf 0.10} ($\pm$ 0.01)\\
		COMPAS & 0.59( $\pm$ 0.04) & {\bf 0.06} ($\pm$ 0.03) & 0.66($\pm$ 0.10) & {\bf 0.15} ($\pm$ 0.07)\\
		\bottomrule
	\end{tabular}
	\label{tab:eff_dp}
	\vspace{-0.5em}
\end{table}

\noindent \emph{\underline{COMPAS}} \cite{angwin2016machine}: The dataset contains 6,172 samples and the goal is to predict whether a defendant will recidivate within two years or no ($Y=1$ for non-recidivism). Race is the sensitive attribute ($A=1$ for white people) and we choose 7 other attributes including age, gender, number of prior crimes, triple of numbers of juvenile felonies/juvenile misdemeanors/other juvenile convictions, original charge degree.

\noindent \emph{\underline{Nutrition}} (National Health and Nutrition Examination Survey) \cite{cox1998plan}: This dataset consists of $14,704$ individuals with $20$ demographic features and laboratory measurements. The target is to predict 15-year survival. We follow the data preprocessing procedures in \cite{wang2021shapley}. Race (white, non-white) is selected as the sensitive attribute. We set $Y=1$ for 15-year survival and $A=1$ for white people.

\subsection{Experimental Setting}
We evaluate \methodname~from two aspects: 1) path explanations for $\Delta_{DP}$; 2) fair learning through FACT selection. Here are some general settings to train model $f$ and learn the FACTs.

\noindent \textbf{Model Training}: We train $f$ using a $70\%/30\%$ train/test split. We randomly split data with this ratio and run experiments 5 times. The average result is reported. The hyper-parameters of the model are tuned by cross-validation. When we calculate path explanations for $\Delta_{DP}$, we implement $f$ as MultiLayer Perceptron (MLP) or Xgboost \cite{chen2016xgboost} and report the results respectively.

\begin{figure}
	\centering
	\subfigure[]{
		\includegraphics[width=0.22\columnwidth]{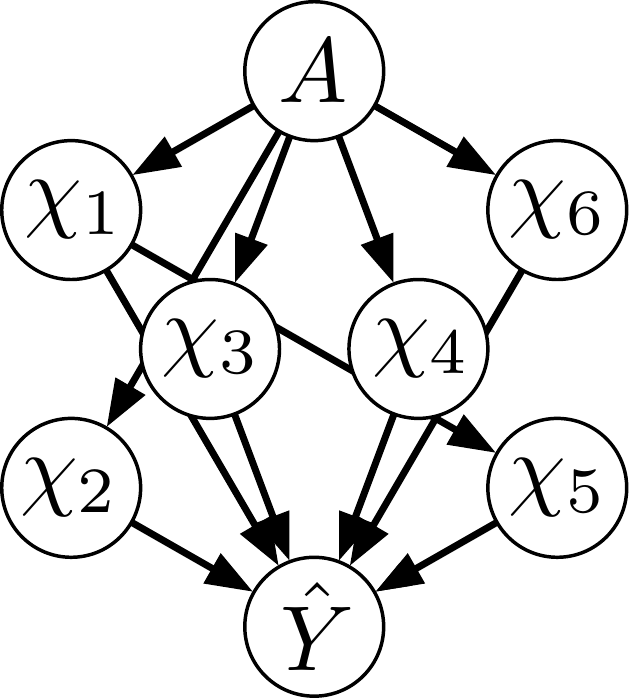}
		\label{fig:nutrition_graph}
	}
	\subfigure[]{
		\includegraphics[width=0.66\columnwidth]{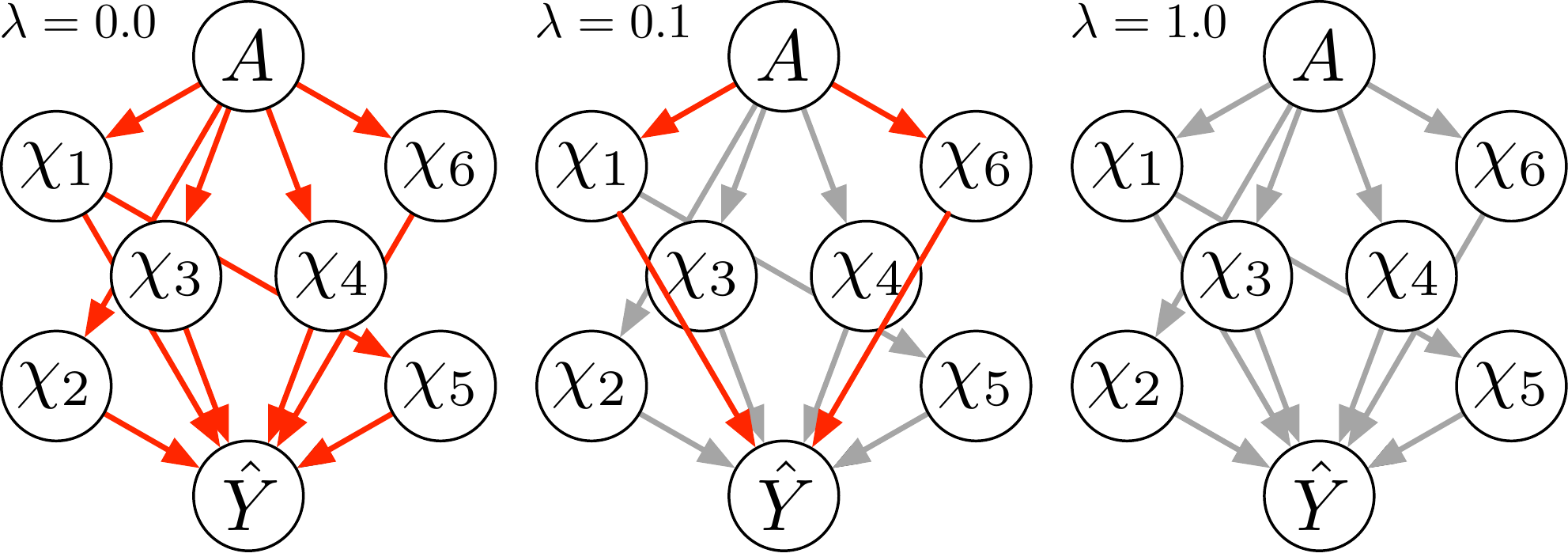}
		\label{fig:fl_path_selection}
	}
	
	\caption{(a): PDAG to explain $\Delta_{DP}$ on Nutrition, showing top 7 paths contributing to $\Delta_{DP}$. The meanings of nodes are in Table \ref{tab:nutrition_feature_contribution}. (b): The set of selected FACTs (${\pathsubset}^{*}$) under different values of $\lambda$ (larger $\lambda$ means stronger fairness constraints) on Nutrition dataset. Red paths indicates the paths in ${\pathsubset}^{*}$.}
\end{figure}

\begin{table}[t]
	\caption{Feature explanations on Nutrition. Each column shows the contributions of features to $\Delta_{DP}$ with ISV/ASV.} 
	\begin{center}
	    \small
		\begin{tabular}{ccc}
			\toprule
			{Features} & ISV & ASV \\
			\midrule
			Poverty Idx, Food Program (${\chi}_1$) & \cellcolor{red!90} 0.0318 & \cellcolor{red!100} 0.0355\\
			Blood pressure (${\chi}_2$) & \cellcolor{red!40} 0.0141 & \cellcolor{red!35} 0.0129\\
			Serum magnesium (${\chi}_3$) & \cellcolor{red!24} 0.0076 & \cellcolor{red!27} 0.0079\\
			Blood protein (${\chi}_4$) & \cellcolor{red!18} 0.0064 & \cellcolor{red!25} 0.0075\\
			Sedimentation rate (${\chi}_5$) & \cellcolor{red!30} 0.0099 & \cellcolor{red!25} 0.0061\\
			White blood cells, Red blood cells (${\chi}_6$) & \cellcolor{blue!90} -0.0077 & \cellcolor{blue!100} -0.0082 \\
			\bottomrule
		\end{tabular}
	\end{center}
	\label{tab:nutrition_feature_contribution}
	\vspace{-0.5em}
\end{table}

\begin{table}[t]
	\caption{Path explanations on Nutrition dataset. The first column shows the contributions of paths to $\Delta_{DP}$~($\Phi_{f}(p_i)$), the second column shows the contributions to utility~($\Psi_{f}(p_i)$).}  
	\begin{center}  
	    \small
		\begin{tabular}{ccc}
			\toprule
			Paths & $\Phi_{f}(p_i)$ & $\Psi_{f}(p_i)$ \\
			\midrule
			$A \rightarrow {\chi}_1 \rightarrow \hat{Y}$ & \cellcolor{red!90} 0.0324 & 0.0039 \\
			$A \rightarrow {\chi}_2 \rightarrow \hat{Y}$& \cellcolor{red!40} 0.0126 & 0.0009\\
			$A \rightarrow {\chi}_3 \rightarrow \hat{Y}$ & \cellcolor{red!26} 0.0081 & 0.0006\\
			$A \rightarrow {\chi}_4 \rightarrow \hat{Y}$ & \cellcolor{red!25} 0.0077 & 0.0032\\
			$A \rightarrow {\chi}_5 \rightarrow \hat{Y}$ & \cellcolor{red!20} 0.0060 & 0.0006\\
			$A \rightarrow  {\chi}_1 \rightarrow {\chi}_5 \rightarrow \hat{Y}$ & \cellcolor{red!10} 0.0031 & 0.0009\\
			$A \rightarrow {\chi}_6 \rightarrow \hat{Y}$ & \cellcolor{blue!100} -0.0082 & < 0.0001\\
			\bottomrule
		\end{tabular}
	\end{center}
	\label{tab:nutrition_path_contribution}
	\vspace{-0.5em}
\end{table}

\noindent \textbf{Causal Discovery}: For the synthetic dataset, we directly use the ground-truth causal graph. For real-world datasets, the ground-truth causal graphs are not available. For Adult and Nutrition datasets, we use the causal graphs built in previous works \cite{nabi2018fair, chiappa2019path, wang2021shapley}. For COMPAS dataset, we construct the causal graph with PC algorithm \cite{spirtes2000causation}, with directions on certain edges restricted and corrected according to domain knowledge. The PDAGs and rules we use to determine the causal directions are shown in the appendix. 

\subsection{Path Explanations for $\Delta_{DP}$}
\subsubsection{Baselines} Since there is no existing method specifically designed to explain disparity by causal paths. We adopt the following explanation methods as baselines: 
\begin{itemize}
	\item Feature-based Explanation by Shapley Values: we use different Shapley values (Independent Shapley Values(\textit{ISV}) in \cite{lundberg2017unified} and Asymmetric Shapley Values(\textit{ASV}) in \cite{frye2020asymmetric}) to calculate the feature-based contribution to disparity with Eq.(\ref{eq:dp_feat}). 
	\item Path-specific Explanations (PSE): we calculate the path-specific effect of each potential active path as the estimation of its contribution to disparity, following the calculation in \cite{chiappa2019path}. Since the calculation of path-specific effect requires that the causal relations among $X(\mathbf{P})$ must be identified in $\g$. We only report the quantitative results on synthetic, Adult and COMPAS datasets, where this condition is satisfied.
\end{itemize}

\subsubsection{Evaluation Metric} As path-specific effect (PSE) can also provide estimations of path contributions to disparity, we directly compare PSE and \methodname~with the following evaluation metrics.

\begin{enumerate}
	\item Accuracy: For synthetic dataset where the ground-truth path contributions is available, we evaluate the methods with the normalized root-mean-square error: $\frac{\sqrt{\sum_{p_i \in \mathbf{P}}(\theta_f(p_i) - \Phi_f(p_i))^2}}{\sqrt{\sum_{p_i \in \mathbf{P}}(\theta_f(p_i))^2}}$ where $\theta_f(p_i)$ is the ground-truth contribution of $p_i$.
	\item Efficiency: Efficiency is the property that the sum of the contribution values from all FACTs exactly equals to the total disparity (Property 1): $\sum_{p_i \in \mathbf{P}} \Phi_f(p_i) = \Delta_{DP}$. To show how close methods come to achieving efficiency, we compute normalized absolute difference between summation of path contributions and disparity: $\frac{|\sum_{p_i \in \mathbf{P}} \Phi_f(p_i) - \Delta_{DP}|}{|\Delta_{DP}|}$.
\end{enumerate}

\subsubsection{Quantitative Results on Estimation of Path Contributions} The result on accuracy is shown in Table \ref{tab:est_dp_ctb}. Our \methodname~outperforms PSE on different datasets and prediction models, especially when $f$ learns non-linear relation. The result on efficiency is shown in Table \ref{tab:eff_dp}. We can find that the summation of path explanations obtained by \methodname~is closer to the original value of disparity than baseline. 

\begin{figure}[ht]
	\centering
	\includegraphics[width=1.0\columnwidth]{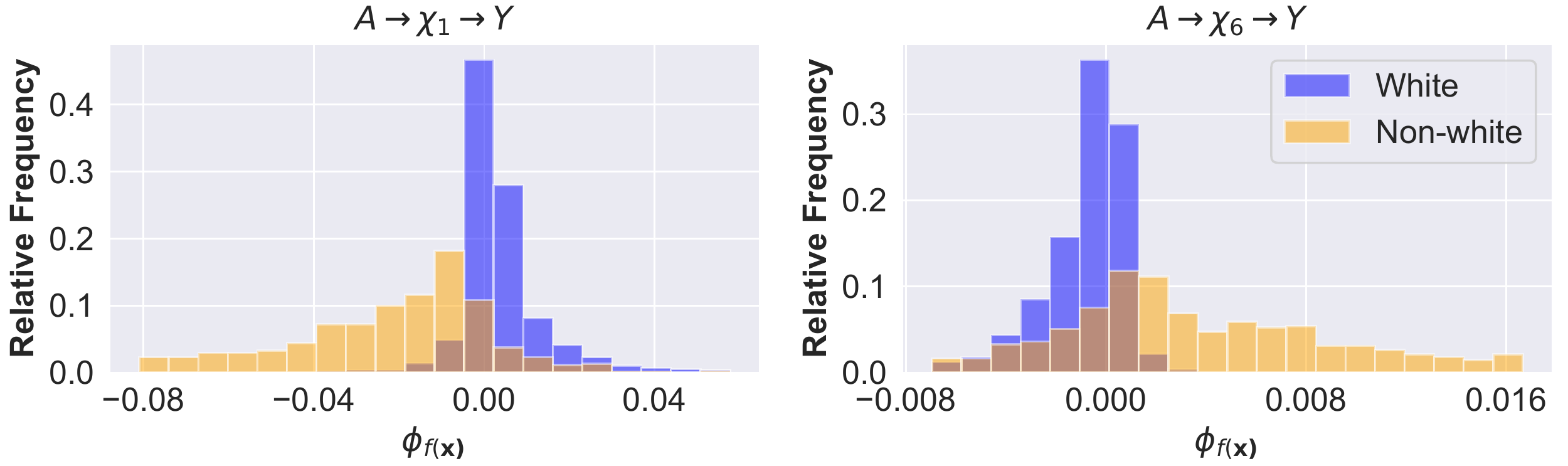}
	\caption{The histograms of $\phi_{f(\mathbf{x})}$ for two selected paths $A \rightarrow X_1 \rightarrow \hat{Y}$ and $A \rightarrow X_6 \rightarrow \hat{Y}$.}
	\label{fig:nutrition_hist}
	\vspace{-0.5em}
\end{figure}

\subsubsection{Qualitative Analysis on Real Dataset}
We choose Nutrition as the case study in this section and the results on other datasets are shown in the appendix. We train a model $f$ without any fairness constraint on the dataset. Figure \ref{fig:nutrition_graph} illustrates the causal graph which contains the top paths contributing to $\Delta_{DP}$. Table \ref{tab:nutrition_feature_contribution} shows the feature contributions to $\Delta_{DP}$ obtained by ISV and ASV. The result of \methodname~is shown in Table \ref{tab:nutrition_path_contribution}. The first column of the table represents the path contributions to $\Delta_{DP}$, while the second column shows the path contributions to the utility. 

We can see that ${\chi}_1$,  which includes the features of economic status $\{$Poverty Idx, Food Program$\}$, has an important effect on $\Delta_{DP}$. Here Poverty Idx stands for the Poverty Index of a person and Food Program denotes whether he/she is qualified for food programs. Its contribution calculated from ASV is more dominant than that from ISV. This is because economic status, in addition to its direct impact, indirectly affects the output through sedimentation rate, as shown in Figure \ref{fig:nutrition_graph}. While ISV underestimates the impact of economic status and ASV mixes up all its impacts through different paths, \methodname~ can provide a more comprehensive and detailed analysis.

We also plot the histograms of $\phi_{f(\mathbf{x})}$ for selected paths in Figure \ref{fig:nutrition_hist}. We can see that $\phi_{f(\mathbf{x})}$ distributes differently across racial groups. 

\subsection{Fairness Learning through FACTs Selection}
\subsubsection{Baselines} We choose fair learning baselines as in \cite{begley2020explainability}.
\begin{itemize}
	\item UNFAIR: A baseline model without
	any fairness constraint.
	\item AFL \cite{zhang2018mitigating}: AFL is a fair learning algorithm which trains the model by maximizing the predictors ability to predict $Y$ while minimizing the adversary’s ability to predict $A$.
	\item RFL \cite{agarwal2018reductions}: Agarwal proposes an approach to fair classification, which optimize a constrained objective to achieve fairness.
\end{itemize}

For fair comparison, we use the same structure of MLP for all methods. For our method, we will first train an UNFAIR model $f$ and apply our algorithm on $f$ to get $\pathsubset^{*}$. Then we finetune $f$ and use $v_{f(\mathbf{x})}(\pathsubset^{*})$ as the new prediction result.

\begin{figure*}[ht]
	\centering
	\includegraphics[width=1.8\columnwidth]{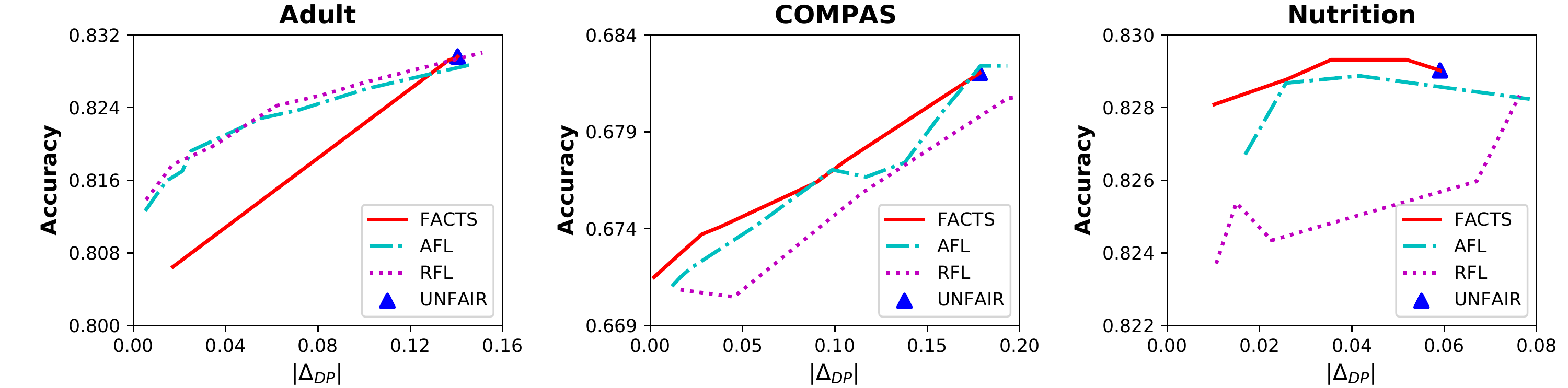}
	\caption{The accuracy-fairness trade-off curves for $\Delta_{DP}$ on various datasets. The upper-left corner (high accuracy, low disparity) is preferred. Curves are generated by searching on a range of fair coefficients (for example, $\lambda$ in Eq.(\ref{eq:loss}))}
	\label{fig:result_fl}
	\vspace{-0.5em}
\end{figure*}

\subsubsection{Evaluation} We follow the work of \cite{agarwal2018reductions} to evaluate the fair learning algorithms. For each method, we run experiments in a range of the weights of fairness constraints (such as $\lambda$ in Eq.(\ref{eq:loss})). Then we plot the accuracy-disparity curves for each method. 

\subsubsection{Results} 
We firstly compare the path contributions to utility and disparity from Table \ref{tab:nutrition_path_contribution} and use Figure \ref{fig:fl_path_selection} to illustrate how our algorithm works. $A \rightarrow {\chi}_1 \rightarrow \hat{Y}$ is the largest disparity contributor and makes a major contribution to utility. Other paths also contribute to $\Delta_{DP}$, but their contributions to utility are relatively low. Algorithm \ref{alg:fl_searching} can determine the optimal sets of selected paths under different $\lambda$ in Eq.(\ref{eq:loss}). In Figure \ref{fig:fl_path_selection}, we can see that when $\lambda = 0$, $\pathsubset^{*}$ contains all FACTs to maintain maximal utility. If $\lambda$ is large enough~($\lambda = 1.0$), all FACTs will be removed to meet this strict fairness constraint. As $\lambda = 0.1$, $\pathsubset^{*}$ only contains $A \rightarrow {\chi}_1 \rightarrow \hat{Y}$ and $A \rightarrow {\chi}_6 \rightarrow \hat{Y}$. The ratio $\frac{\Psi_{f}(p_i)}{\Phi_{f}(p_i)}$ of $A \rightarrow {\chi}_1 \rightarrow \hat{Y}$ is greater than $\lambda = 0.1$ so it is selected. $A \rightarrow {\chi}_6 \rightarrow \hat{Y}$ is also included because the direction of its $\Phi_{f}(p_i)$ is opposite to that of $A \rightarrow {\chi}_1 \rightarrow \hat{Y}$. It may be odd to keep two paths with different directions of $\Phi_{f}(p_i)$ to obtain a low absolute value of disparity. This reflects the limitation of treating disparity as a single metric of fairness.

We show the accuracy-fairness trade-off curves for demographic parity in Figure \ref{fig:result_fl}. Our fair learning algorithm with \methodname~ achieves comparable results as other fair learning algorithms. It outperforms baselines on COMPAS and Nutrition datasets. While on Adult dataset, most of $\Delta_{DP}$ comes from a single path (See appendix). After this path is removed, both $\Delta_{DP}$ and accuracy decrease obviously. Our algorithm can be completely explained by which paths are removed, while the baselines can only obtain black-box models. 

\section{Conclusion}
In this work, we propose to a novel framework to explain algorithmic fairness with the causal graph. In particular, we decompose the disparity into contributions from fairness-aware causal paths that link the sensitive attribute and model outcome on a causal graph. We propose an algorithm to identify those paths and calculate their contributions to the disparity. With the path explanations, we can gain more insight into the inequality of a machine learning algorithm and propose an interpretable method to achieve better trade-offs between utility and fairness.

\begin{acks}
Weishen Pan, Sen Cui, and Changshui Zhang would like to acknowledge the funding by the National Key Research and Development Program of China (No. 2018AAA0100701) and Beijing Academy of Artificial Intelligence (BAAI). Fei Wang would like to acknowledge the support from Amazon Web Service (AWS) Machine Learning for Research Award and Google Faculty Research Award.
\end{acks}

\bibliographystyle{ACM-Reference-Format}
\bibliography{main}

\appendix

\section{Proofs}
\subsection{Proof of Proposition \ref{prop1}}
\begin{proof}
	Suppose there is no potential active path between two variables $X_i$ and $X_j$ on $\g$, but they are dependent. According to the faithful assumption, consider all directions of undirected edges on $\g$, there must be a DAG $\g^{'}$ that the conditional independence relationships among variables encoded in $\g^{'}$ are consistent with those inferred from observational data. That is, there are active paths from certain $X_i$ to $X_j$ on $\g^{'}$. Suppose $p^{'}$ is one of these active paths and $p$ is its corresponding path on $\g$. According to the definition, $p$ must be a potential active path, which violates the assumption.
\end{proof}

\subsection{Proofs of Proposition \ref{prop2} and \ref{prop3}}
To prove Proposition \ref{prop2} and \ref{prop3}, we first prove the following lemma:
\begin{lemma}
	\label{lemma1}
	In a DAG, if $p$ is an active path, then every subpath of $p$ is an active path.
\end{lemma}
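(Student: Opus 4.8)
The plan is to reduce the statement to the purely local characterization of colliders. Since we work relative to the empty conditioning set, the definition in Section~\ref{sec:causal_models} collapses to the following: a path is active if and only if it contains no collider. So I would first record this reformulation explicitly, and then argue that passing to a subpath can never create a new collider.

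First I would set up notation, writing $p = [V_0, V_1, \dots, V_n]$ with each consecutive pair adjacent in $\g$, and recall that an interior node $V_k$ (with $0 < k < n$) is a collider on $p$ precisely when both incident edges point into it, i.e. $V_{k-1} \to V_k \leftarrow V_{k+1}$. The key observation I would emphasize is that being a collider is a \emph{local} property: whether $V_k$ is a collider depends only on the orientations of the two edges $V_{k-1}-V_k$ and $V_k-V_{k+1}$ along the path, and the two endpoints $V_0, V_n$ are never colliders by definition.

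Next I would take an arbitrary subpath $q = [V_a, \dots, V_b]$ with $0 \le a < b \le n$ and check node by node. Any interior node $V_k$ of $q$ (with $a < k < b$) is also an interior node of $p$, and the two edges incident to it along $q$ are exactly the edges $V_{k-1}-V_k$ and $V_k-V_{k+1}$ that it has along $p$; since $p$ is active, $V_k$ is not a collider on $p$, hence not a collider on $q$. The two endpoints $V_a$ and $V_b$ of $q$ are not colliders because endpoints never are. Therefore $q$ contains no collider and is active.

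I do not expect a genuine obstacle here; the content is just the locality of the collider condition together with the fact that restricting to a subpath only demotes former interior nodes to endpoints, which can remove colliders but never add them. The one thing to be careful about is the bookkeeping of indices, ensuring that the interior nodes of $q$ really are interior nodes of $p$ with unchanged incident edges, and treating the endpoint cases separately. Note that the DAG hypothesis is not actually needed for the $\mathbf{C}=\emptyset$ case; it is carried along because Proposition~\ref{prop2} and Proposition~\ref{prop3}, for which this lemma is a stepping stone, are stated in that setting.
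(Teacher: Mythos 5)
Your proof is correct and follows essentially the same route as the paper's: both argue that with $\mathbf{C}=\emptyset$ activeness reduces to the absence of colliders, and that every node of a (contiguous) subpath inherits non-collider status from $p$. Your version is merely more careful about the locality of the collider condition and the endpoint cases, which the paper's terse proof glosses over, and your side remark that the DAG hypothesis is not needed here is also accurate.
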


\begin{proof}
	Suppose $p'$ is a subpath of $p$. For each $X$ that is a node on $p'$, then $X$ is also on $p$. Since $p$ is an active path, then $X$ is a non-collider. According to the generality of $X$, each node on $p'$ is a non-collider. So $p'$ is also an active path. 
\end{proof}

Then we will prove Proposition \ref{prop2} with Lemma \ref{lemma1}:
\begin{proof}{(Proposition \ref{prop2})} 
    Suppose $p$ is a potential active path and $p'$ is any subpath of $p$. We consider the three conditions in the definition of potential active path. If all edges on $p$ are directed and $p$ satisfies the definition of active path. Then all edges on $p'$ are also directed and $p'$ is active according to Lemma \ref{lemma1}, and thus a potential active path. Similarly, for the rest two cases, if $p$ satisfies the condition, we can get $p'$ also satisfies the same condition by Lemma \ref{lemma1}. So we can have $p'$ is a potential active path.
\end{proof}

\begin{proof}{(Proposition \ref{prop3})}
	Since $\g$ is a DAG, a potential active path is equivalent to an active path according to the definition. Assume $\exists X_i, X_j \in \mathbf{X}\left(\mathbf{P}\right), X_i \in \Adj(X_j)$, but neither of them is prior to the other with respect to $A$. According to the definition, at least one of the following conditions are satisfied:
	\begin{itemize}
		\item There is no active path from $A$ to $\hat{Y}$ through both $X_i$ and $X_j$. Without loss of generality, we let $X_i \rightarrow X_j$. Since $X_i \in \mathbf{X}\left(\mathbf{P}\right)$, there exists an active path from $A$ to $\hat{Y}$ through $X_i$. We consider its subpath which connects $A$ and $X_i$ and call it $p$. $p$ is an active path according to Lemma \ref{lemma1}. Considering a new path obtained by appending "$X_j \rightarrow \hat{Y}$" to $p$, it will be obviously an active path since $X_j$ and $\hat{Y}$ must be non-colliders. This new path is an active path from $A$ to $\hat{Y}$ and both $X_i$ and $X_j$ are on it. So the beginning condition does not hold.
		\item There exist at least two active paths from $A$ to $\hat{Y}$, one is as "$A \rightarrow \dots \rightarrow X_i \dots X_j \dots \rightarrow \hat{Y}$" and the other is as "$A \rightarrow \dots \rightarrow X_j \dots X_i \dots \rightarrow \hat{Y}$". The paths are pointed out from $A$ because $A$ is a source node in $\g$. Without loss of generality, we let $X_i \rightarrow X_j$. We denote the latter path as $p$. And we denote the subpath of $p$ from $X_j$ to $X_i$ as $p'$. $p'$ must be an active path according to Lemma \ref{lemma1}. There are three possibilities: $X_j \rightarrow \dots \rightarrow X_i$, if it is satisfied, $p'$ and the edge $X_i \rightarrow X_j$ will form a directed cycle, which violates that $\g$ is a DAG; $X_j \leftarrow \dots \leftarrow X_i$ or $X_j \leftarrow \dots \leftarrow X_k \rightarrow \dots \rightarrow X_i$, in both cases, $X_j$ will be a collider on $p$ and $p$ is not active. Since all the situation can not be satisfied, the beginning condition dose not hold.
	\end{itemize}
	
	In conclusion, neither of the conditions above can be satisfied. So the original assumption can not be satisfied, and $\mathbf{X}\left(\mathbf{P}\right)$ is completely ordered with respect to $A$ on $\g$.
\end{proof}

\subsection{Proofs of Properties}
\begin{proof}
	\textbf{Efficiency}: from Eq. (\ref{eq:calculation}), we will have $\sum_{p_i \in \mathbf{P}} \phi_{f(\mathbf{x})}(p_i) = v_{f(\mathbf{x})}(\mathbf{P}) -  v_{f(\mathbf{x})}(\emptyset)$. Since $v_{f(\mathbf{x})}(\mathbf{P}) = f(\mathbf{x})$, we have:
	\begin{equation}
		\begin{aligned}
			\sum_{p_i \in \mathbf{P}} \Phi_{f}(p_i) &=  (\mathbb{E}_{\mathbf{X}|A= 1}[v_{f(\mathbf{x})}(\mathbf{P})] - \mathbb{E}_{\mathbf{X}|A=0}[v_{f(\mathbf{x})}(\mathbf{P})]) \\
			& - (\mathbb{E}_{\mathbf{X}|A= 1}[v_{f(\mathbf{x})}(\emptyset)] - \mathbb{E}_{\mathbf{X}|A=0}[v_{f(\mathbf{x})}(\emptyset)])  
		\end{aligned}
	\end{equation}
	\noindent where $\mathbb{E}_{\mathbf{X}|A= 1}[v_{f(\mathbf{x})}(\mathbf{P})] - \mathbb{E}_{\mathbf{X}|A=0}[v_{f(\mathbf{x})}(\mathbf{P})] = \Delta_{DP}$.
	And when we calculate $v_{f(\mathbf{x})}(\emptyset)$, we only use $E_i$ and $\bar{\mathbf{X}}(\mathbf{P}_{Y}))$, which are independent to $A$. So $v_{f(\mathbf{x})}(\emptyset)$ is also independent to $A$, $\mathbb{E}_{\mathbf{X}|A= 1}[v_{f(\mathbf{x})}(\emptyset)] - \mathbb{E}_{\mathbf{X}|A=0}[v_{f(\mathbf{x})}(\emptyset)] = 0$.
	
	\noindent \textbf{Linearity}: In the calculation of $v_{(\alpha f_1 + \beta f_2)(\mathbf{x})}(\pathsubset)$, $(\alpha f_1 + \beta f_2)(\mathbf{x})$ can be written as $\alpha f_1(\mathbf{x}) + \beta f_2(\mathbf{x})$. Then we have $v_{(\alpha f_1 + \beta f_2)(\mathbf{x})}(\pathsubset) = \alpha v_{f_1(\mathbf{x})}(\pathsubset) + \beta v_{f_2(\mathbf{x})}(\pathsubset)$ for all possible $\pathsubset$. With Eq.(\ref{eq:calculation}) and (\ref{eq:dp_fact}), we can prove the Linearity.
	
	\noindent \textbf{Nullity}: If $v_{f(\mathbf{x})}(\pathsubset \cup \{p_i\}) = v_{f(\mathbf{x})}(\pathsubset),~\forall \mathbf{x}, \pathsubset \subset \mathbf{P} \backslash p_i$, obviously all the item in the summation of Eq. (\ref{eq:calculation}) will be 0. Then $\phi_{f(\mathbf{x})}(p_i)$ and $\Phi_{f}(p_i)$ are both 0.
\end{proof}

\section{Extension to Other Disparity}
As discussed in previous work by Baer {\em et al.} \cite{baer2019fairness}, equalized odds and equalized opportunity satisfy when $\hat{Y} \ind A | Y$. In fact,  $\hat{Y} \ind A | Y$ can also induce accuracy parity. To study the conditional dependence between $\mathbf{X}$ and $\hat{Y}$ on $Y$, we first introduce a new causality concept:

\noindent \textbf{Spouse.} If both $X_i$ and $X_j$ are parents of $X_k$, then $X_i$ is a \textit{} of $X_j$ and $X_j$ is a \textit{spouse} of $X_i$ with child $X_k$. 

\noindent \textbf{Potential Active Paths Relative to $Y$.} The definition of potential active path relative to $Y$ can be obtained by replace the notions "active path" with "active path relative to $Y$" in the definition of potential active path. To save space, we only write the first condition as an example: if a path $p$ is a directed path and satisfies the definition of active path relative to $Y$ in Section\ref{sec:causal_models}, then it is a potential active path relative to $Y$.

The potential active path and potential active path relative to $Y$ have the following relation:
\begin{proposition}
	\label{propa1}
	For any path on $\g$ which does not pass through $Y$, if it is a potential active path, then it is a potential active path relative to $Y$.
\end{proposition}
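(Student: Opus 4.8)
The plan is to reduce the entire statement to one elementary observation about how enlarging the conditioning set from $\emptyset$ to $\{Y\}$ affects a path that avoids $Y$, and then to apply that observation mechanically to each of the three defining cases of a potential active path. The core lemma I would isolate is this: if $q$ is a fully directed path that does not have $Y$ among its nodes and $q$ is active relative to $\emptyset$, then $q$ is active relative to $\{Y\}$. Its proof is immediate from the definitions in Section~\ref{sec:causal_models}: activeness relative to $\emptyset$ means $q$ has no collider, so every node of $q$ is a non-collider; and by definition a non-collider $X$ is active relative to $\{Y\}$ precisely when $X \neq Y$, which holds for every node of $q$ since $q$ avoids $Y$. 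Hence every node is active relative to $\{Y\}$ and so is the whole path. The two facts worth stressing here are that the collider status of a node is a property of the path's own edge orientations (independent of the conditioning set), and that the node set of a path is unchanged when we reorient its undirected edges or embed it into an oriented completion.

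Next I would run the three cases of the definition of potential active path in parallel, each time invoking the lemma on the appropriate directed path while reusing the same orientations or the same witnessing DAG $\g'$ that certify the original unconditioned property. In Case~1, $p$ is already fully directed and active relative to $\emptyset$; since $p$ omits $Y$, the lemma yields that $p$ is active relative to $\{Y\}$, which is exactly Case~1 of the relative-to-$Y$ definition. In Case~2, the hypothesis is that every orientation of the undirected edges of $p$ produces a path active relative to $\emptyset$; fixing an arbitrary orientation, the resulting directed path still omits $Y$, so the lemma upgrades it to active relative to $\{Y\}$, and arbitrariness of the orientation gives Case~2 of the relative-to-$Y$ definition. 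In Case~3, I would reuse the very $\g'$ and orientation witnessing the unconditioned property: the corresponding path in $\g'$ is active relative to $\emptyset$ and still omits $Y$, so the lemma makes it active relative to $\{Y\}$, while the consistency of $\g'$ with the observed conditional independences is a property of $\g'$ alone and carries over unchanged.

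I do not expect a genuine obstacle, since the statement is true essentially because enlarging the conditioning set to $\{Y\}$ can only alter the status of colliders and of $Y$ itself, and a $Y$-avoiding path that is active relative to $\emptyset$ has neither. The one point I would be most explicit about is the claim that reorienting undirected edges (Case~2) or passing to the oriented completion $\g'$ (Case~3) preserves the property ``does not pass through $Y$''; this is clear because such changes affect edge directions but not the vertex sequence of the path, yet it is precisely the hinge that lets the single lemma dispatch all three cases uniformly.
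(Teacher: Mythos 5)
Your proof is correct. Note, however, that the paper itself states this proposition \emph{without} proof (its appendix only proves the earlier propositions on potential active paths and the Shapley-value properties), so there is no authorial argument to compare against; your write-up supplies exactly the argument the definitions invite. The key observation you isolate --- that collider status is determined by the path's own edge orientations and hence unaffected by enlarging the conditioning set from $\emptyset$ to $\{Y\}$, so a collider-free directed path avoiding $Y$ has every node active relative to $\{Y\}$ --- is the right reduction, and your uniform application of it to the three clauses of the potential-active-path definition (reusing the same arbitrary orientation in the second clause and the same witnessing DAG $\g[G]'$ in the third, both of which preserve the vertex sequence and, for $\g[G]'$, the data-consistency condition) closes the statement with no gaps.
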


\begin{proposition}
	\label{propa2}
	If there is no potential active path relative to $Y$ between two variables (neither of them is $Y$) on $\g$, then the two variables are conditional independent on $Y$.
\end{proposition}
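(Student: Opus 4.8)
The plan is to mirror the contradiction argument used in the proof of Proposition~\ref{prop1}, replacing the unconditional notion of activeness by its relative-to-$Y$ counterpart throughout. Suppose, for contradiction, that there is no potential active path relative to $Y$ between two variables $X_i$ and $X_j$ (neither equal to $Y$) on $\g$, yet $X_i \nind X_j \mid Y$ in the observational distribution.

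First I would invoke the faithfulness assumption. The true distribution is faithful to a DAG obtained by orienting the undirected edges of $\g$; that is, among all orientations of the undirected edges of $\g$ there exists a DAG $\g'$ whose encoded conditional independence relations are consistent with those inferred from the data. Because $X_i \nind X_j \mid Y$ holds in the data, faithfulness forces an active path relative to $Y$ from $X_i$ to $X_j$ in $\g'$; call it $p'$, and let $p$ be its corresponding path on $\g$.

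The remaining step is to check that $p$ meets the definition of a potential active path relative to $Y$. This follows immediately from the third condition of that definition: $\g'$ is a DAG obtained by orienting the undirected edges of $\g$ whose conditional independence relations match the data, and by construction the corresponding path $p'$ is active relative to $Y$. Hence $p$ is a potential active path relative to $Y$ between $X_i$ and $X_j$, contradicting the assumption; therefore $X_i \ind X_j \mid Y$.

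The subtlety to keep an eye on --- and where this proof departs from the unconditional Proposition~\ref{prop1} --- is that an active path relative to $Y$ may legitimately contain colliders, namely colliders that are $Y$ itself or have $Y$ among their descendants, so $p'$ need not be collider-free and may even pass through $Y$ as an intermediate node. I do not expect this to obstruct the argument, since the notion of potential active path relative to $Y$ is defined precisely to absorb such colliders through the relative-to-$Y$ activeness appearing inside each of its three conditions; thus the passage from $p'$ on $\g'$ to $p$ on $\g$ carries over verbatim. The only care required is to confirm that, because $\g$ may contain undirected edges along $p$, it is condition~3 of the definition (existence of a consistent DAG orientation) that is invoked rather than conditions~1 or~2.
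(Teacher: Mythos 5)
Your proof is correct and follows essentially the same route the paper takes: the paper gives no separate proof of Proposition~\ref{propa2}, treating it as the relative-to-$Y$ analogue of Proposition~\ref{prop1}, whose appendix proof is exactly your faithfulness-based contradiction (dependence given $Y$ plus faithfulness yields a consistent DAG orientation $\g'$ containing an active path relative to $Y$, whose corresponding path in $\g$ is by definition a potential active path relative to $Y$). One small correction to your final caveat: if the corresponding path $p$ happens to be fully directed, condition~3 (which requires $p$ to contain undirected edges) does not literally apply and you invoke condition~1 instead, but this does not affect the argument.
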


We denote the set of potential active paths from $A$ to $\hat{Y}$ relative to $Y$ as $\mathbf{P}_{Y}$. We use an example in Figure \ref{fig:example_pos_active_cond} to further illustrate the definition. We can get: $\mathbf{P}_{Y} = \{A \rightarrow X_1 \rightarrow \hat{Y}, A \rightarrow X_1 \rightarrow Y \leftarrow X_2 \rightarrow \hat{Y}\}$. $A \rightarrow X_1 \rightarrow \hat{Y}$ obviously satisfies the definition, while $A \rightarrow X_1 \rightarrow Y \leftarrow X_2 \rightarrow \hat{Y}$ is not so intuitive. It is not an active path from $A$ to $\hat{Y}$. But if we consider the conditioning set $\{Y\}$, we can see $Y$ is a collider on this path and belongs to the conditional set. While other nodes $\{X_1,X_2\}$ are non-colliders and not in $\{Y\}$, according to the definition, $A \rightarrow X_1 \rightarrow Y \leftarrow X_2 \rightarrow \hat{Y}$ is an active path relative to $Y$.

We can propose a similar definition of ordering with respect to $A$ conditioned on $Y$. $X_i \succ_{A|Y} X_j$ means $X_i$ is prior to $X_j$ with respect to $A$ conditioned on $Y$. When we further consider the information of $A$ passes to $\mathbf{P}_{Y}$. It passes through spouses with child $Y$ ($X_1 \rightarrow Y \leftarrow X_2$) as well as from direct predecessor to direct successor ($A \rightarrow X_1$). So we propose new concepts other than direct predecessor/successor.

If the two following conditions satisfy, then we call $X_i$ is an {\em informative successor} of $X_j$ with respect to $A$ relative to $Y$: 1) $X_j \succ_{A|Y} X_i$; 2) $X_j$ is adjacent to $X_i$ or $X_i, X_j$ are spouses with child $Y$. And $X_j$ is called an {\em informative predecessor} of $X_i$ with respect to $A$ relative to $Y$. The set of all informative successors of $X_j$ is denoted as $\Is_{A|Y}(X_i)$. The set of all informative predecessors of $X_j$ is denoted as $\Ip_{A|Y}(X_i)$. For Figure \ref{fig:example_pos_active_cond}, we have $\Ip_{A|Y}(X_1) = \{A\}, \Ip_{A|Y}(X_2) = \{A, X_1\}$. We will write $\Ip_{A|Y}(X_i)$ as ${\Ip}_i$ and $\Is_{A|Y}(X_i)$ as ${\Is}_i$ for simplicity in the following.

\begin{figure}
	\centering
	\subfigure[]{
		\includegraphics[width=0.22\columnwidth]{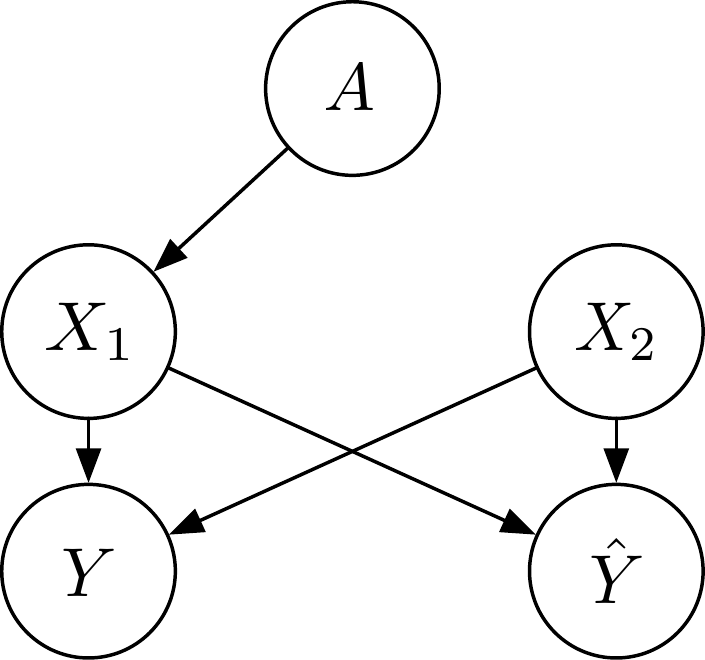}
		\label{fig:example_pos_active_cond}
	}
	\quad \quad
	\subfigure[]{
		\includegraphics[width=0.2\columnwidth]{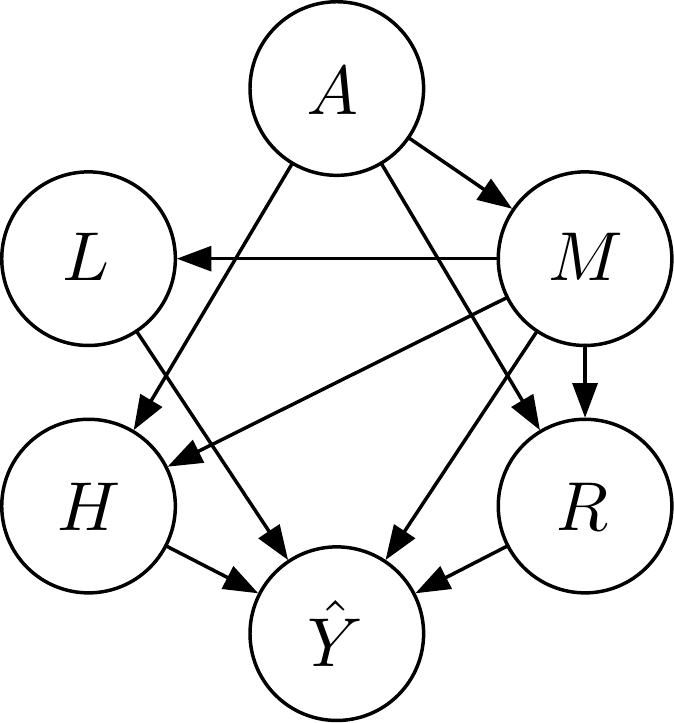}
		\label{fig:adult_graph}
	}
	\quad \quad
	\subfigure[]{
		\includegraphics[width=0.2\columnwidth]{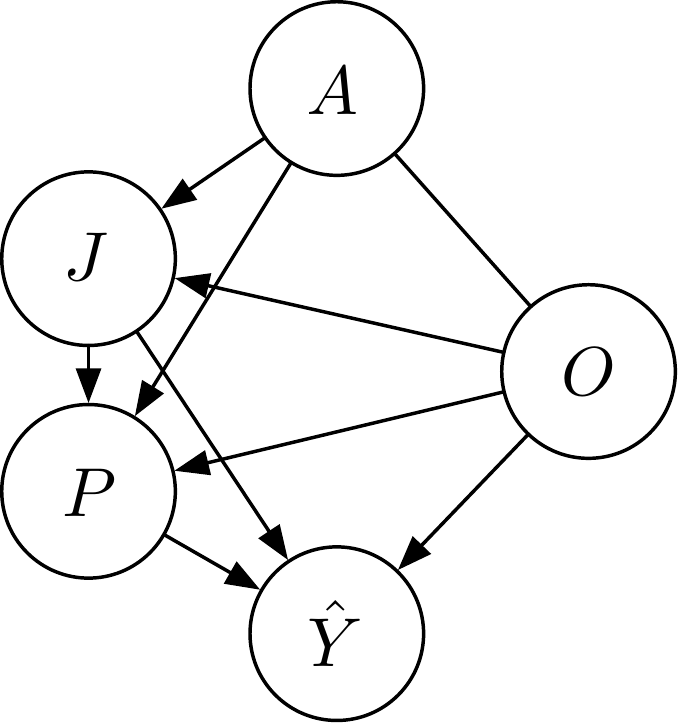}
		\label{fig:compas_graph}
	}
	
	\caption{(a): A PDAG to illustrate the concepts when considering $\{Y\}$ as the condition set. (b) PDAG of $\mathbf{P}$ on Adult dataset. A:sex, M:marital status, L:level of education, H:working hours per week, R:relationship; (c) PDAG of $\mathbf{P}$ on COMPAS dataset. A:race, O:age, J: triple of numbers of juvenile felonies/juvenile misdemeanors/other juvenile convictions, P:prior crimes}
	\label{fig:other_graph}
\end{figure}

\noindent \textbf{Completely Ordered With Respect To $A$ Relative to $Y$.} $\mathbf{X}\left(\mathbf{P}_{Y}\right)$ is defined to be completely ordered with respect to $A$ relative to $Y$ on $\g$ if: $\forall X_i, X_j \in \mathbf{X}\left(\mathbf{P}_{Y}\right)$, we have if $X_i$ and $X_j$ are adjacent or spouses with child $Y$, one of them must be the informative successor or predecessor of the other.

After defining these concepts, we can directly replace the concept of $\Dp_{i}$ with $\Ip_{i}$ during the step of grouping features. When we try to decompose $f(\mathbf{x})$ into the path contributions conditioned on $Y$. We need to learn a two sets of functions $\{g_{X_i|Y=1}\}$ and $\{g_{X_i|Y=0}\}$, corresponding to different values of $Y$. And we need to infer $X_i$ as $X_i = g_{X_i|Y}({\Ip}_{i}, \bar{\mathbf{X}}(\mathbf{P}_{Y}))$. Thus we have $\phi_{f(\mathbf{x}),y}(p_i)$ obtain by replacing the function $g_{X_i}$ with $g_{X_i|Y}$ in the calculation of $\phi_{f(\mathbf{x}),y}(p_i)$.

For equalized opportunity, we compute the path contribution to $\Delta_{OP}$ as:
\begin{equation}
	\label{eq:eo_fact}
	\Phi_{f}(p_i) = \mathbb{E}_{\mathbf{X}|Y=1, A=1}[\phi_{f(\mathbf{x}),y}(p_i)] - \mathbb{E}_{\mathbf{X}|Y=1,A=0}[\phi_{f(\mathbf{x}),y}(p_i)]
\end{equation}

For accuracy parity, we have:
\begin{equation}
	\label{eq:ac_fact}
	\Phi_{f}(p_i) = \mathbb{E}_{\mathbf{X}| A=1}[\phi_{f_{y}(\mathbf{x}),y}(p_i)] - \mathbb{E}_{\mathbf{X}|A=0}[\phi_{f_y(\mathbf{x}),y}(p_i)]
\end{equation}

For equalized odds, we need to consider the disparity conditioned on $Y = 1$ and $Y = 0$ respectively.

\section{Implementation Details}
\noindent \textbf{Generation of Synthetic Data.} In experiments on effiency, we generate the data as follows: a feature $X_i$ is randomly connected to $X_j$ (with $X_j$ pointing to $X_i$) with $0.2$ probability if $i > j$, otherwise $0$. A feature $X_i$ is randomly connected to $A$ (with $A$ pointing to $X_i$) with $0.4$ probability. The generative function of each feature is linear. In experiments on accuracy, we constrain symmetric structure and parameters to obtain the ground-truths. In $S_1$, the probability of $Y = 1$ is calculated by the a linear function of features (normalized to be the in range of $[0,1]$). While in $S_2$, the probability is calculated with additional non-linear transformation.

\noindent \textbf{Model Parameters.} When we implement $f$ as MLP, we use the network with one hidden layer and the dimensions of hidden layer are 32/16/8/16 for Synthetic/Adult/COMPAS/Nutrition respectively. The models are implemented and trained with the Python package {\em scikit-learn}. We choose ADAM to optimize the model. 

\noindent \textbf{Causal Discovery Results on Real-world datasets.} Since there is no widely-used causal graph for COMPAS dataset, we generate a PDAG by the following procedure: first we run the PC algorithm on the data, then we determine and correct the direction following the rules: $\{$race, age, gender$\} \rightarrow \{$numbers of prior crimes, numbers of juvenile felonies/juvenile misdemeanors/other juvenile conviction$\} \rightarrow \{$prior crimes$\} \rightarrow \{$original charge degree$\}$.

\noindent \textbf{Estimation of $\Phi_{f}$.} Since \methodname~and all baselines are expensive to compute exactly, we use a Monte Carlo approximation of Eq.( \ref{eq:calculation}). In particular, we conduct breadth first search and sample 100 orderings from $\Pi$ and average across those orderings. For mixed-type data, we binarize all categorical features and follow the recent work of Wang {\em et al.} \cite{wang2021shapley} to formulate $g_{X_i}$ and $E_i$.

\section{Additional Experiment Results}

\begin{figure}
	\centering
	\subfigure[]{
	\setlength{\tabcolsep}{0.01\columnwidth}{
	    \scriptsize
		\begin{tabular}{ccc}
			\toprule
			{Paths} & $\Phi_{f}(p_i)$ & $\Psi_{f}(p_i)$ \\
			\midrule
			$A \rightarrow M \rightarrow \hat{Y}$ & \cellcolor{red!100} 0.116 & 0.034\\
			$A \rightarrow H \rightarrow \hat{Y}$ & \cellcolor{red!20} 0.024 & 0.005\\
			$A \rightarrow M \rightarrow L \rightarrow \hat{Y}$ & \cellcolor{red!10} 0.011 & 0.001\\
			$A \rightarrow M \rightarrow H \rightarrow \hat{Y}$ & \cellcolor{red!10} 0.010 & 0.001\\
			$A \rightarrow R \rightarrow \hat{Y}$ & \cellcolor{blue!10} -0.013 & <0.001 \\
			$A \rightarrow M \rightarrow R \rightarrow \hat{Y}$ & \cellcolor{blue!25} -0.031 & <0.001 \\
			\bottomrule
		\end{tabular}
		\label{tab:adult_table}
	}}
	\subfigure[]{
		\setlength{\tabcolsep}{0.01\columnwidth}{
		    \scriptsize
			\begin{tabular}{ccc}
				\toprule
				{Paths} & $\Phi_{f}(p_i)$ & $\Psi_{f}(p_i)$ \\
				\midrule
				$A \rightarrow O \rightarrow \hat{Y}$ & \cellcolor{red!100} 0.109 & 0.010\\
				$A \rightarrow P \rightarrow \hat{Y}$ & \cellcolor{red!60} 0.066 & 0.005\\
				$A \rightarrow J \rightarrow P \rightarrow \hat{Y}$ & \cellcolor{red!15} 0.017 & 0.004\\
				$A \rightarrow O \rightarrow J \rightarrow P \rightarrow \hat{Y}$ & \cellcolor{red!5} 0.005 & 0.001\\
				$A \rightarrow O \rightarrow J \rightarrow \hat{Y}$ & \cellcolor{red!5} 0.005 & <0.001 \\
				$A \rightarrow O \rightarrow P \rightarrow \hat{Y}$ & \cellcolor{blue!15} -0.017 & <0.001 \\
				\bottomrule
			\end{tabular}
			\label{tab:compas_table}
		}}
		
	\caption{Path explanations on: (a)~Adult and (b) COMPAS dataset. The first column shows the contributions of paths to $\Delta_{DP}$($\Phi_{f}(p_i)$), the second column shows the contributions of paths to utility($\Psi_{f}(p_i)$).}
	\label{fig:other_table}
	\vspace{-0.5em}
\end{figure}

The results on Adult and COMPAS dataset are shown in Figure \ref{fig:adult_graph} and \ref{fig:compas_graph}. Due to limited space, we only display the top paths of $\mathbf{P}$ on the graphs. And the corresponding path contributions are shown in Figure \ref{fig:other_table}. The edge $A-O$ in \ref{fig:compas_graph} is undirected. In Adult dataset, marital status acts as the essential feature contributing to disparity. While in COMPAS dataset, age is the largest contributor.

\end{document}